	\tikzset{thick/.style={line width=.6mm}}
	\tikzstyle{hugetensor}=[rounded rectangle,thick,draw=black,minimum width=40mm,minimum height = 3mm]
	\tikzstyle{squaretensor}=[rounded rectangle,thick,draw=black,minimum width=15mm,minimum height = 7mm]
	\tikzstyle{littletensor}=[circle,thick,draw=black,fill=red!30,inner sep=0pt,minimum size=10pt]
	\tikzstyle{medtensor}=[circle,thick,draw=black,fill=red!30,inner sep=0pt,minimum size=15pt]
	\tikzstyle{tinytensor}=[circle,thick,draw=black,fill=black,inner sep=0pt,minimum size = 4pt]
\newcommand{\vecspace}[1]{\mathbb{F}^{#1}}
\newcommand{\genericspace}{\mathbb{F}^{d_1 \times d_2 \times \cdots \times d_n}}
\newcommand{\bigO}[1]{\mathcal{O}(#1)}
\newcommand{\normalization}{\mathcal{Z}}
\newcommand{\decoherence}{\mathcal{D}}
\newcommand{\decoherededges}{E_D}
\newcommand{\visibleedges}{E_V}
\newcommand{\hiddenedges}{E_H}
\newcommand{\cutedges}{E_C}
\newcommand{\TNnode}{A^{(v)}}
\newcommand{\TNBnode}{B^{(v)}}
\newcommand{\TNnodestar}{A^{(v)*}}
\newcommand{\N}{\mathbb{N}}
\newcommand{\R}{\mathbb{R}}
\newcommand{\C}{\mathbb{C}}
\newcommand{\F}{\mathbb{F}}
\newcommand{\clique}{\mathrm{Clq}}
\newcommand{\inc}{\mathrm{Inc}}
\newcommand{\ind}{x}
\newcommand{\edgevar}{\eta}
\newcommand{\condind}[3]{#1 \perp #2 \,|\, #3}
\newtheorem{theorem}{Theorem}
\newtheorem{definition}{Definition}
\newtheorem{corollary}{Corollary}
\theoremstyle{definition}
\title{Probabilistic Graphical Models and Tensor Networks: A Hybrid Framework}
\author{
  Jacob Miller$^*$\\
  Mila, Universit\'e de Montr\'eal \\
  Montr\'eal QC, Canada \\
  \texttt{jmjacobmiller@gmail.com} \\
  \And 
  Geoffrey Roeder$^*$\\
  Princeton University\\
  Princeton, NJ, USA \\
  \texttt{roeder@princeton.edu} \\
  \And
  Tai-Danae Bradley$^*$ \\
  X - The Moonshot Factory\\
  Mountain View, CA, USA \\
  \texttt{tai.danae@math3ma.com} \\
    \And
    \vspace{-0.4cm} \\
    {}$^\ast$Sandbox@Alphabet\\
    Mountain View, CA, USA \\
}
\begin{document}
\maketitle

\begin{abstract}

We investigate a correspondence between two formalisms for discrete probabilistic modeling: probabilistic graphical models (PGMs) and tensor networks (TNs), a powerful modeling framework for simulating complex quantum systems. The graphical calculus of PGMs and TNs exhibits many similarities, with discrete undirected graphical models (UGMs) being a special case of TNs. However, more general probabilistic TN models such as Born machines (BMs) employ complex-valued hidden states to produce novel forms of correlation among the probabilities. While representing a new modeling resource for capturing structure in discrete probability distributions, this behavior also renders the direct application of standard PGM tools impossible. We aim to bridge this gap by introducing a hybrid PGM-TN formalism that integrates quantum-like correlations into PGM models in a principled manner, using the physically-motivated concept of decoherence. We first prove that applying decoherence to the entirety of a BM model converts it into a discrete UGM, and conversely, that any subgraph of a discrete UGM can be represented as a decohered BM. This method allows a broad family of probabilistic TN models to be encoded as partially decohered BMs, a fact we leverage to combine the representational strengths of both model families. We experimentally verify the performance of such hybrid models in a sequential modeling task, and identify promising uses of our method within the context of existing applications of graphical models.

\end{abstract}


\section{Introduction}\label{sec:intro}

Probabilistic graphical models (PGMs) are a framework for encoding conditional independence information about multivariate distributions as graph-based representations, whose generality and interpretability have made them an indispensable tool for probabilistic modeling. Undirected graphical models (UGMs), also known as Markov random fields, form a general class of PGMs with a diverse range of applications in fields such as computer vision~\cite{wang2013markov}, natural language processing~\cite{wang2019bert}, and biology~\cite{mora2011biological}. More recently, the graphical structure of discrete UGMs has been shown to be closely related to that of tensor networks (TNs)~\cite{duality2018}, a state-of-the-art modeling framework first developed for quantum many-body physics~\cite{verstraete2008matrix,Orus2014}, whose use in machine learning---for example in model compression~\cite{novikov2015tensorizing,cichocki2016tensor}, proving separations in expressivity between deep and shallow learning methods~\cite{cohen2016expressive,levine2018deep}, and as standalone learning models~\cite{milesdavid,novikov2017exponential}---has been a subject of growing interest.

In this work, we explore the correspondence between UGMs and TNs in the setting of probabilistic modeling. Whereas UGMs are specifically designed to represent probability distributions, general TNs represent high-dimensional tensors whose values can be positive, negative, or even complex. While restricting TN parameters to take on non-negative values results in an exact equivalence with UGMs~\cite{duality2018}, it also limits their expressivity. More general probabilistic models built from TNs, as exemplified by the Born machine (BM)~\cite{bornmachine2018} model family, employ complex latent states that permit them to utilize novel forms of interference phenomena in structuring their learned distributions. While this provides a new resource for probabilistic modeling, it also limits the applicability of foundational PGM concepts such as conditional independence.

We make use of the physics-inspired concept of \emph{decoherence}~\cite{zurek2003decoherence} to develop a hybrid framework for probabilistic modeling, which allows for the coexistence of tools and concepts from UGMs alongside quantum-like interference behavior. We use this framework to define the family of \emph{decohered Born machines} (DBMs), which we prove is sufficiently expressive to reproduce any probability distribution expressible by discrete UGMs or BMs, along with more general families of TN-based models. We further show that DBMs satisfy a conditional independence property relative to its decohered regions, with the operation of decoherence permitting the values of latent random variables to be conditioned on in an identical manner as UGMs. Finally, we verify the empirical benefits of such models on a sequential modeling task.

\paragraph{Related Work}

Our work builds on the duality results of \cite{duality2018}, which establish a graphical correspondence between discrete UGMs and TNs, by further accounting for the distinct probabilistic behavior of both model classes. Much work across physics, machine learning, stochastic modeling, and automata theory has introduced and explored novel properties of quantum-inspired probabilistic models~\cite{zhao2010norm,bailly2011quadratic,FV2012,gao2017efficient,pestun2017tensor,pestun2017language,bornmachine2018,stoudenmire2018learning,stokes_terilla,benedetti2019generative,BST_2020,miller2021tensor,gao2021enhancing}, almost all of which explicitly or implicitly employ tensor networks. The relative expressivity of these models was explored in~\cite{glasser2019,adhikary2021quantum}, where quantum-inspired models were proven to be inequivalent to graphical models. Fully-quantum generalizations of various graphical models were investigated in~\cite{leifer2008quantum}.

\section{Preliminaries}

We work with real and complex finite-dimensional vector spaces $\vecspace{d}$, where $\F$ denotes one of $\R$ or $\C$ when the distinction is not needed. We take an $n$th order tensor, or \emph{$n$-tensor}, over $\F$ to be a scalar-valued map $T: [d_1] \times \cdots \times [d_n] \to \F$ from an $n$-fold Cartesian products of index sets, where $[d] := \{ 1, \ldots, d \}$ and where the vector space of all $n$-tensors is denoted by $\vecspace{d_1 \times \cdots \times d_n}$. Matrices, vectors, and scalars over $\F$ respectively correspond to $2$-tensors, $1$-tensors, and $0$-tensors, whereas \emph{higher-order tensors} refers to any $n$-tensor for $n > 2$. The \emph{elements} of $T$ are individual values of $T$ on input tuples, and written as $T_{x_1, \ldots, x_n} := T(x_1, \ldots, x_n) \in \F$, while the $i$th \emph{mode} of $T$ refers to the $i$th argument of $T$. The contraction of a vector $u \in \vecspace{d_i}$ with the $i$th mode of an $n$ tensor $T$ is the $(n-1)$-tensor $T'$ whose elements satisfy $T'_{x_1, \ldots, x_{i-1}, x_{i+1}, \ldots, x_n} = \sum_{x_i = 1}^{d_i} u_{x_i} T_{x_1, \ldots, x_{i}, \ldots, x_n}$. Although dense representations of $n$-tensors require $\bigO{d^n}$ parameters to specify, where $d = \max(d_1, \ldots, d_n)$, we will see later how tensor networks bypass this exponential scaling for many families of higher-order tensors. As one simple example, the \emph{tensor product} of any $n$-tensor $T \in \mathbb{F}^{d_1 \times \cdots \times d_n}$ and $m$-tensor $T' \in \vecspace{d'_1 \times \cdots \times d'_m}$ is the $(n+m)$-tensor $T \otimes T' \in \vecspace{d_1 \times \cdots \times d_n \times d'_1 \times \cdots \times d'_m}$ whose elements are given by $(T \otimes T')_{x_1, \ldots, x_n, x'_1, \ldots, x'_m} = T_{x_1, \ldots, x_n} T'_{x'_1, \ldots, x'_m}$. We use $\R_+$ to indicate the non-negative real numbers, and take the 2-norm of a tensor $T$ to be the scalar $\lVert T \rVert_2 = \sqrt{\sum_{x_1, \ldots, x_n} |T_{x_1, \ldots, x_n}|^2} \in \R_+$. Finally, we use $u^\dagger$ to indicate the conjugate transpose of a complex vector or matrix $u$.

We focus exclusively on undirected graphs $G = (V, E)$, whose vertex and edge sets are denoted by $V$ and $E$. In anticipating the needs of tensor networks, we allow the existence of edges which are incident to only one node, which we refer to as \emph{visible} (i.e. dangling) \emph{edges}. We use $\visibleedges \subseteq E$ to indicate the set of all visible edges, and $\hiddenedges := E - \visibleedges$ to indicate the set of all \emph{hidden edges}, which are edges adjacent to two nodes. Graphs without dangling edges will be called \emph{proper} graphs. For any node $v \in V$, we denote the set of edges incident to $v$ by $\inc(v)$. A \emph{clique} of $G$ is a maximal subset $C \subseteq V$ such that every pair of nodes $v, v' \in C$ are connected by an edge, and we use $\clique(G)$ to denote the set of all cliques of $G$. We define a \emph{cut set} of $G$ to be any set of edges $\cutedges \subseteq E$ such that the removal of all edges in $\cutedges$ from $G$ partitions the graph into two disjoint non-empty sub-graphs.

We work with random variables (RVs), indicated by uppercase letters such as $X, Y, Z$, and their possible outcomes, indicated by lowercase equivalents such as $x, y, z$. RVs and their outcomes are frequently indexed with values chosen from an index set, for example $i \in \mathcal{I} = \{1, \ldots, n\}$, in which case the notation $X_{\mathcal{I}}$ indicates the joint RV $(X_1, \ldots, X_n)$. A similar notation is used for multivariate functions $f(x_{\mathcal{I}}) := f(x_1, \ldots, x_n)$, as well as for tensor elements $T_{x_{\mathcal{I}}} := T_{x_1, \ldots, x_n}$, and a related notation $\vecspace{\times_{i \in \mathcal{I}} d_i} := \vecspace{d_1 \times \cdots \times d_n}$ is used for spaces of tensors. Given three disjoint sets of random variables $X_A, X_B, X_C$, we use $\condind{X_A}{X_B}{X_C}$ to indicate the conditional independence of $X_A$ and $X_B$ given $X_C$, and $X_A \perp X_B$ to indicate the (unconditional) independence of $X_A$ and $X_B$.

\subsection{Undirected Graphical Models}

Probabilistic graphical models (PGMs) represent multivariate probability distributions using a proper graph $G = (V, E)$ whose nodes $v$ each correspond to distinct RVs $X_v$. We focus on undirected graphical models (UGMs), whose probability distributions are determined by a collection of \emph{clique potentials} $\phi_C: X_C \to \R_+$, non-negative valued functions from the RVs associated with nodes in $C$, where $C$ ranges over all cliques of $G$. Given a UGM with clique potentials $\phi_C$ defined on a graph $G$ with $n$ nodes, the probability distribution represented by the UGM is
\begin{equation}
\label{eq:ugm_prob}
    P(\ind_1,\ldots,\ind_n) = \frac{1}{\normalization} \prod_{C \in \clique(G)} \phi_C(\ind_C),\ \ \textrm{where}\ \normalization =\!\! \sum_{\ind_1,\ldots, \ind_n} \prod_{C \in \clique(G)} \phi_C(\ind_C).
\end{equation}
For brevity, we will often omit normalization factors such as $\normalization$ in the following, with the understanding that such terms must ultimately be added to ensure a valid probability distribution. UGMs satisfy an intuitive conditional independence property involving disjoint subsets of nodes $A, B, C \subseteq V$ for which the removal of $C$ leaves the nodes of $A$ and $B$ in separate disconnected subgraphs of $G$. In this case, the RVs associated with these nodes satisfy $\condind{X_A}{X_B}{X_C}$.

While the definition above is the standard presentation of UGMs, to permit an easier comparison with tensor networks we will more frequently view them using a dual graphical formulation. In this dual picture, nodes represent clique potentials $\phi_C$ and edges represent RVs $X_i$.

\section{Tensor Networks}

Tensor networks (TNs) provide a general means to efficiently represent higher-order tensors in terms of smaller tensor cores, in much the same way as UGMs efficiently represent multivariate probability distributions in terms of smaller clique potentials. \emph{Tensor contraction} is crucial in the structure of TNs, and generally involves the multiplication of an $n$-tensor $T \in \vecspace{d_1 \times \cdots \times d_n}$ and an $m$-tensor $T' \in \vecspace{d'_1 \times \cdots \times d'_m}$ along modes $k, k'$ of equal dimension $d_k = d'_{k'}$, to yield a single output $(n+m-2)$-tensor $T''$ whose elements are given by
\begin{equation}
\label{eq:contraction}
    T''_{\ind_1 \cdots \ind_{k-1} \ind_{k+1} \cdots \ind_n \ind'_1 \cdots \ind'_{k'-1} \ind'_{k'+1} \cdots \ind'_{m}} = \sum_{\ind_k = 1}^{d_k} T_{\ind_1 \cdots \ind_{k-1} \ind_k \ind_{k+1} \cdots \ind_n} T'_{\ind'_1 \cdots \ind'_{k'-1} \ind_k \ind'_{k'+1} \cdots \ind'_m}.
\end{equation}
\noindent Although appearing complex in its general form, it is worth verifying from Equation~(\ref{eq:contraction}) that tensor contraction generalizes matrix-vector and matrix-matrix multiplication, along with vector inner product and scalar multiplication. Tensor contraction is associative, in the sense that multiple contractions between multiple tensors yield the same output regardless of the order of contraction, with different contraction orderings often having vastly different memory and compute requirements.

\emph{Tensor network diagrams}~\cite{penrose71} provide an intuitive formalism for reasoning about computations involving tensor contraction using undirected graphs. In a TN diagram, each $n$-tensor $T \in \genericspace$ is represented as a node of degree $n$, and each mode of $T$ is represented as an edge incident to $T$. Tensor contraction between two tensors along a pair of modes is depicted by connecting the corresponding edges of the nodes, with the actual operation of tensor contraction depicted by merging the nodes representing both input tensors into a single node which shares the visible edges of both input nodes. In this manner, a TN diagram with $n$ visible edges and any number of hidden edges specifies a sequence of tensor contractions whose output will always be an $n$-tensor. For example, the TN diagram $\begin{tikzpicture}[y=0.15cm,x=0.3cm,baseline={([yshift=-0.5ex] current bounding box.center)}]
    \draw[thick] (-1,0) -- (1,0);
    \node[littletensor, fill=green!20, minimum size = 6pt] at (0,0) {};
    \node[] at (2,0) {=};
    \draw[thick] (3,0) -- (7.4,0);
    \node[littletensor, fill=blue!20, minimum size = 6pt] at (4,0) {};
    \node[littletensor, fill=red!20, minimum size = 6pt] at (5.2,0) {};
    \node[littletensor, fill=yellow!20, minimum size = 6pt] at (6.4,0) {};
    \end{tikzpicture}$ 
expresses a tensor contraction used in the SVD to express a matrix as the product of three smaller matrices. As a degenerate case, the tensor product of two tensors is depicted by drawing them adjacent to each other, with no connected edges. 

\begin{figure}
\centering
\includegraphics[width=\textwidth]{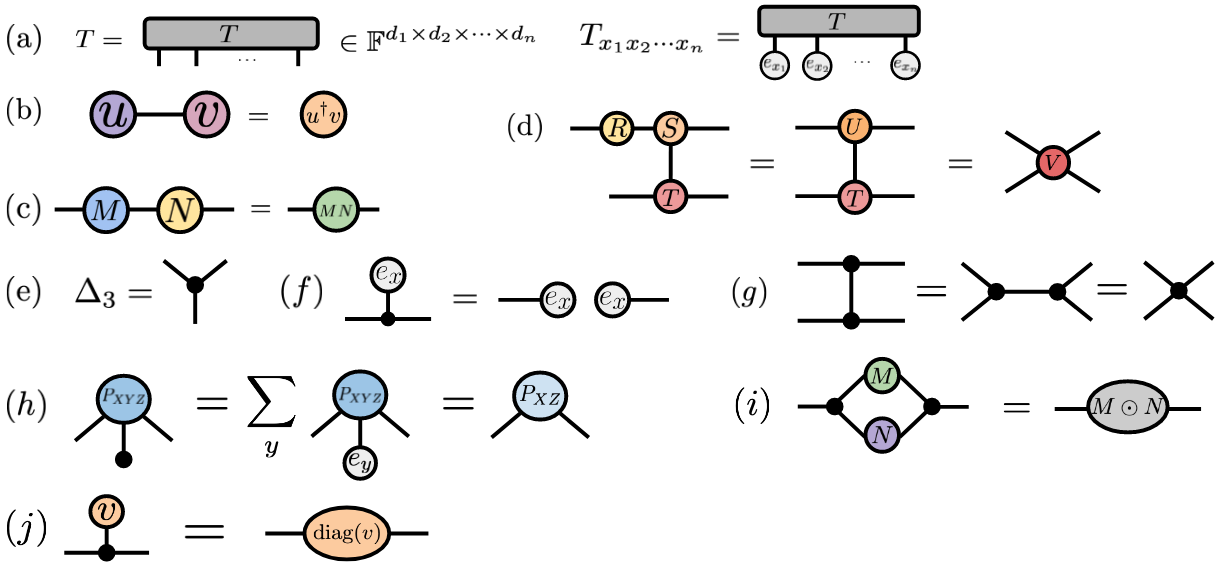}
\caption{Tensor network and copy tensor notations. (a) Basic notations for TNs, where nodes represent tensors and edges represent tensor modes. Vectors $e_x$ denote elements of an orthonormal basis used to express tensors as arrays. (b--c) Vector inner products $u^\dagger v$ and matrix multiplication $MN$ are simple examples of tensor contraction. (d) Tensor contraction is associative, with the contraction of tensors $R$, $S$, and $T$ by first contracting $R$ and $S$ (shown) giving the same result as first contracting $S$ and $T$. (e) Copy tensors $\Delta_n = \sum_x (e_x)^{\otimes n}$ are denoted by a black dot with $n$ edges. (f) Copy tensors act on basis vectors $e_x$ by copying them to all visible edges and (g) they permit any connected network of copy tensors to be arbitrarily rearranged, provided the total number of visible edges remain unchanged. Copy tensors also allow the graphical expression of basis-dependent operations, including (h) marginalizing over a RV in a probability distribution, (i) the element-wise product of tensors, and (j) the creation of diagonal matrices from a vector of diagonal values.}
\label{fig:TN_diagrams}
\end{figure}

\emph{Tensor networks} use a fixed TN diagram to efficiently parameterize a family of higher-order tensors in terms of a family of smaller dense tensor \emph{cores}, as stated in the following:

\begin{definition}
\label{def:tn}
A tensor network consists of a graph $G = (V, E)$, along with a positive integer valued map $d_{(-)}: E \to \N_{>0}$ assigning each edge $\edgevar$ to a vector space of dimension $d_\edgevar$, and a map $A^{(-)}: v \mapsto \vecspace{\times_{\edgevar \in \inc(v)} d_\edgevar}$ assigning each node $v$ to a tensor core $A^{(v)}$ whose shape is determined by the dimensions assigned to edges incident to $v$. The tensor represented by a tensor network with $m$ nodes is that resulting from a contraction of all $m$ tensor cores according to the tensor network diagram defined by $G$.
\end{definition}

Dimensions $d_i$ assigned to hidden edges are referred to as \emph{bond dimensions}, and for a fixed graph $G$ they represent the primary hyperparameters setting the tradeoff between a TN's compute/memory efficiency and its expressivity. A simple but illustrative example of a TN is a low-rank matrix factorization, whose graph $G$ is the line graph on two nodes $\begin{tikzpicture}[y=0.15cm,x=0.3cm,baseline={([yshift=-0.5ex] current bounding box.center)}]
    \draw[thick] (-1,0) -- (2.2,0);
    \node[littletensor, fill=blue!20, minimum size = 6pt] at (0,0) {};
    \node[littletensor, fill=yellow!20, minimum size = 6pt] at (1.2,0) {};
    \end{tikzpicture}$, and whose single hidden edge is associated with a bond dimension $r$ equal to the rank of the parameterized matrices.

\paragraph{Copy Tensors} A simple family of tensors plays a key role in understanding the relationship between UGMs and TNs. Given an orthonormal basis $\mathcal{B} = \{e_1,\ldots,e_d\}$ for a vector space $\vecspace{d}$, for each $n \geq 1$ we define the $n$th order \emph{copy tensor} associated with $\mathcal{B}$ to be $\Delta_n := \sum_{\ind=1}^d (e_\ind)^{\otimes n}$. When $\Delta_n$ is contracted with any of the $d$ basis vectors $e_\ind$, the result is a tensor product $(e_\ind)^{\otimes n-1}$ of $n-1$ independent copies of $e_\ind$ (Figure~\ref{fig:TN_diagrams}f). This convenient property only holds for vectors chosen from the basis defining the copy tensor, leading to a one-to-one correspondence between copy tensor families and orthonormal bases~\cite{coecke_pavlovic_vicary_2013}. The copy tensors $\Delta_1$ and $\Delta_2$ respectively correspond to the $d$-dimensional all-ones vector and identity matrix, with the former allowing the expression of sums over tensor elements. 

A general copy tensor $\Delta_n$ is depicted graphically as a single black dot with $n$ edges (Figure~\ref{fig:TN_diagrams}e), while $\Delta_2$ is depicted as an undecorated edge. These satisfy a useful closure property under tensor contraction, whereby any connected network of copy tensors is identical to a single copy tensor with the same number of visible edges~\cite[Theorem 6.45]{fong2019invitation}. This property allows connected networks of copy tensors to be rearranged in any manner, so long as the number of visible edges remains unchanged (Figure~\ref{fig:TN_diagrams}g). While general tensor network diagrams remain unaffected by changes of basis in the hidden edges, the use of copy tensors permits the graphical depiction of a larger family of basis-dependent linear algebraic operations (Figure~\ref{fig:TN_diagrams}h--j).

\subsection{Undirected Graphical Models as Non-Negative Tensor Networks}
\label{sec:ugm_as_nntn}

\begin{figure}
\centering
\includegraphics[width=\textwidth]{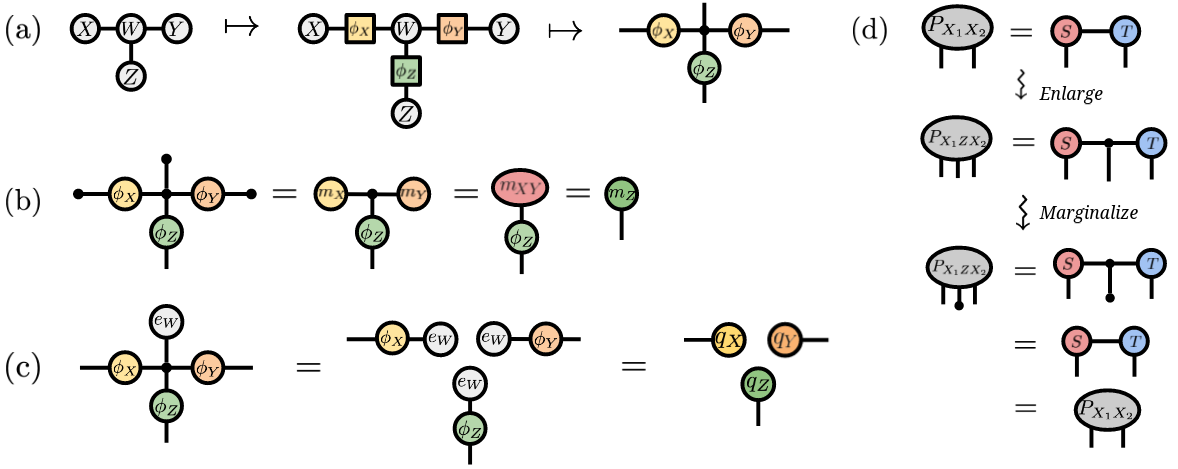}
\caption{Tensor network description of UGMs. (a) Example of the duality between UGM-style graphical notation, where nodes are associated with RVs, and TN-style graphical notation, where nodes are associated with clique potentials. Factor graphs act as an intermediate representation, with duality simply interchanging variable and factor nodes. (b) Marginalization of a probabilistic model is represented in TN notation by contracting the corresponding visible edges by first-order copy tensors. (c) Conditioning is represented by contracting the corresponding visible edges by outcome-dependent basis vectors, with conditional independence of the resulting distribution achieved through properties of copy tensors. (d) Any TN with non-negative cores can be converted to a UGM by promoting its hidden edges into visible edges by the use of third-order copy tensors. Marginalizing these new latent RVs recovers the original distribution over the visible edges.}
\label{fig:copy}
\end{figure}

Discrete multivariate probability distributions are an important example of higher-order tensors, with the individual probabilities $P(x_1, x_2, \ldots, x_n)$ of a distribution over $n$ discrete RVs forming the elements of an $n$-tensor. More generally any \emph{non-negative tensor} $T$, whose elements all satisfy $T_{\ind_1\ind_2\cdots \ind_n} \geq 0$, can be converted into a probability distribution $P_T$ by normalizing as $P_T(\ind_1,\ind_2,\ldots,\ind_n) = \frac{1}{\normalization} T_{\ind_1\ind_2\cdots \ind_n}$, where $\normalization =\!\! \sum_{\ind_1,\ind_2,\ldots, \ind_n}\!\! T_{\ind_1\ind_2\cdots \ind_n}$.
%

The connection between multivariate probability distributions and the structure of higher-order tensors extends further, with the independence relation $X_A \perp X_B$ between two disjoint sets of RVs being equivalent to the factorization of their joint probability distribution as the tensor product $P(x_A, x_B) = P(x_A) \otimes P(x_B)$. Methods used to efficiently represent and learn higher-order tensors, such as tensor networks, can be applied to probabilistic modeling, provided that some means of parameterizing only non-negative tensors is employed. We discuss two important approaches for achieving this probabilistic parameterization, one equivalent to undirected graphical models and the other to Born machines.

It was shown in \cite[Theorem 2.1]{duality2018} that the data defining a UGM is equivalent to that defining a TN, but with dual graphical notations that interchange the roles of nodes and edges. Converting from a UGM to an equivalent TN involves expressing each clique potential $\phi_C$ on a clique $C$ of size $k$ as a $k$th-order tensor core $A^{(v_C)}$, depicted as a degree-$k$ node $v_C$ of the TN diagram. Meanwhile, each UGM node representing a discrete RV is replaced by a copy tensor\footnote{Copy tensors were used implicitly in \cite{duality2018}, in the form of hyperedges within a defining hypergraph.} of degree equal to the number of clique potentials the RV occurs in, plus one additional visible edge permitting the values of the RV to appear in the probability distribution described by the TN (Figure~\ref{fig:copy}a). Since every tensor core consists of non-negative elements, the resultant TN is guaranteed to describe a non-negative higher-order tensor. We refer to this family of TN models as \emph{non-negative tensor networks}.

In the dual graphical notation of TNs, marginalization of and conditioning on RVs in UGMs is achieved by contracting each visible edge of the associated TN with either a first-order copy tensor $\Delta_1 = \sum_{x} e_x$ (marginalization) or an outcome-dependent basis vector $e_x$ (conditioning) respectively. Computing the resulting distribution over the remaining RVs is then a straightforward application of tensor contraction~\cite{duality2018}, where any nodes of the TN with no remaining visible edges are merged together (Figure~\ref{fig:copy}b). Furthermore, since variables are associated to copy tensors, the conditional independence property of UGMs can be proven using the copying property of copy tensors (Figure~\ref{fig:copy}c). The appropriate formulation of conditional independence is slightly different in the dual TN notation, owing to the association of RVs to edges rather than nodes. In this graphical framework, conditional independence arises when a conditioning set of RVs $X_C$ form a cut set of the underlying TN graph, in which case the RVs $X_A$ and $X_B$ associated with the two partitions of the graph induced by this cut set will satisfy $\condind{X_A}{X_B}{X_C}$.

The reverse direction of converting non-negative TNs into UGMs is also straightforward, though care is needed with the treatment of hidden edges that aren't connected to copy tensors. In such cases, we can replace any hidden edge with a third-order copy tensor $\Delta_3$, yielding a new visible edge which encodes a latent RV in an enlarged distribution
. This enlargement process is reversible, in the sense that marginalizing over all latent RVs associated with hidden edges yields the original distribution, allowing hidden edges of a non-negative TN to be treated as visible edges without any loss of generality (Figure~\ref{fig:copy}d). We will see shortly that this property is not shared by more general probabilistic TN models.

\section{Born Machines}

\begin{figure}
\centering
\includegraphics[width=\textwidth]{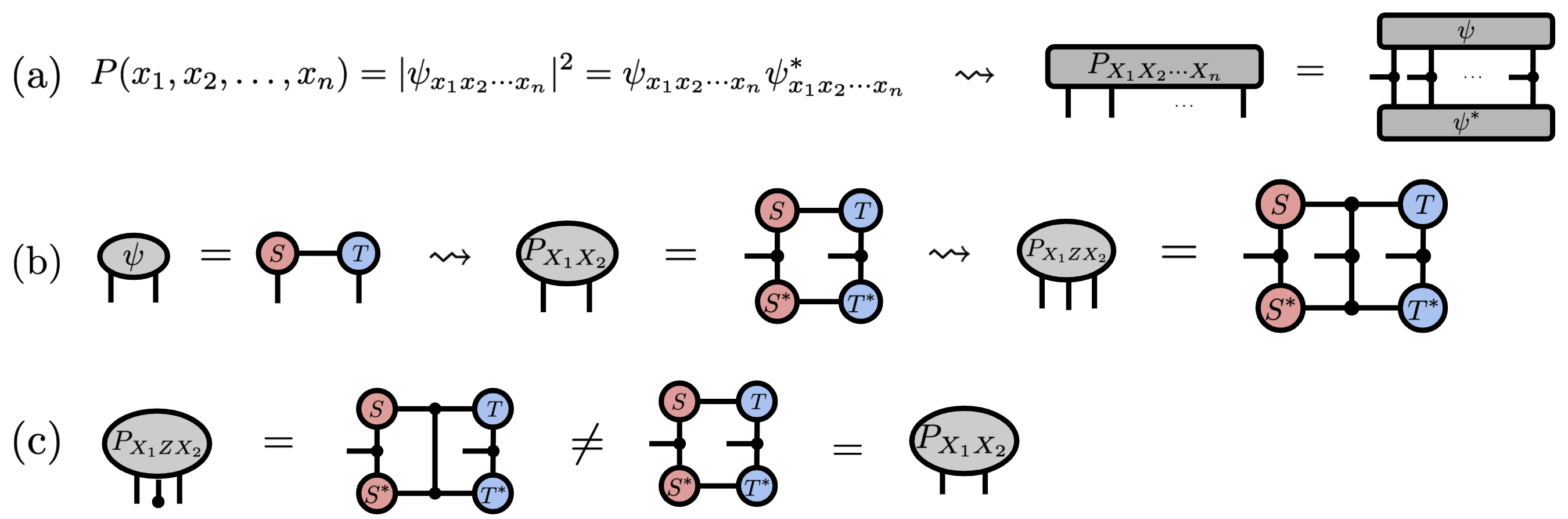}
\caption{Overview of Born machines, a family of TN-based probabilistic models. (a) Born machines represent a general higher-order tensor $\psi$ as a TN, whose elements are converted to probabilities via the Born rule. This can be used to express the probability distribution itself as a composite TN diagram. (b--c) Unlike non-negative TNs, any attempt to read out the hidden edges of BMs as latent RVs alters the overall distribution, a manifestation of the ``observer effect'' of quantum mechanics. Converting a hidden edge to a RV and then marginalizing results in a different distribution.} 
\label{fig:fig3}
\end{figure}

While UGMs represent one means of parameterizing non-negative tensors for probabilistic modeling, an alternate approach is suggested by quantum physics. Discrete quantum systems are fully described by complex-valued \emph{wavefunctions}, higher-order tensors which yield probabilities under the Born rule of quantum mechanics. The efficacy of TNs in learning quantum wavefunctions inspired the Born machine (BM) model~\cite{bornmachine2018}.

\begin{definition}
\label{def:bm}
A Born machine consists of a tensor network over a graph $G$ containing $n$ visible edges, whose associated tensor $\psi \in \vecspace{d_1 \times \cdots \times d_n}$ is converted into a probability distribution via the Born rule $P(x_1, \ldots, x_n) = \frac{1}{\lVert \psi \rVert_2^2} |\psi_{x_1 \cdots x_n}|^2$, where $\lVert \psi \rVert_2$ denotes the 2-norm of $\psi$.
\end{definition}

The Born rule permits the (unnormalized) probability distribution associated with a BM to be expressed as a single \emph{composite TN}, consisting of two copies of the TN parameterizing $\psi$, one with all core tensor values complex-conjugated, and where all pairs of visible edges have been merged via third-order copy tensors (Figure~\ref{fig:fig3}a). Expressing the BM distribution as a single composite TN allows efficient marginal and conditional inference procedures to be applied in a manner analogous to UGMs, namely by contracting the visible edges of the composite TN with vectors $\Delta_1$ (marginalization) or $e_{x_i}$ (conditioning), and then contracting regions of the TN with no remaining visible edges. The ``doubled up'' nature of the composite TN means that intermediate states occurring during inference are described by \emph{density matrices}, which are positive semidefinite matrices employed in quantum mechanics whose non-negative diagonal entries correspond to (unnormalized) probabilities, and whose off-diagonal elements are referred to as ``coherences.''

The existence of non-zero coherences gives BMs the ability to utilize quantum-like interference phenomena in modeling probability distributions, but also makes it difficult to interpret the operation of a BM by assigning latent RVs to its edges, as was possible with non-negative TNs. While we can force a new RV into existence by extracting the diagonal elements of intermediate density matrices using copy tensors (Figure~\ref{fig:fig3}b), this causes the elimination of all coherences in density matrices passing through the edge, with the result that the distribution after marginalizing the new latent variable differs from the original BM distribution (Figure~\ref{fig:fig3}c). This fact, which can be seen as an analogue of the measurement-induced ``observer effect'' in quantum mechanics, represents a tradeoff between expressivity and interpretability in BMs that isn't available in PGMs.

\section{A Hybrid Framework}

\begin{figure}
\centering
\includegraphics[width=\textwidth]{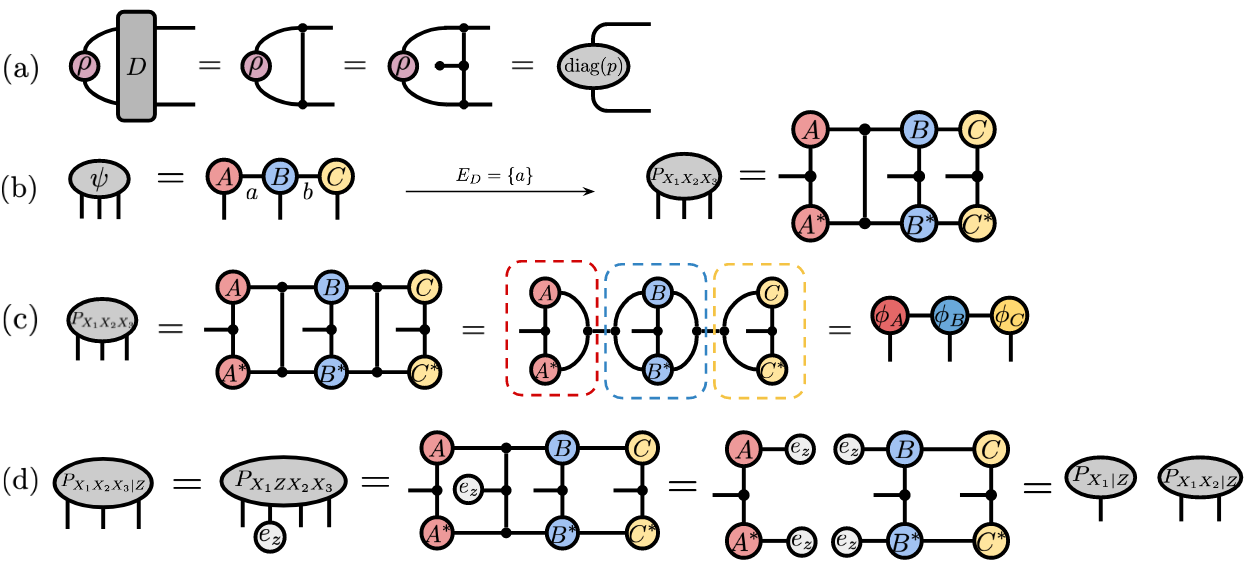}
\caption{Decoherence and the decohered Born machine (DBM) model. (a) The decoherence operator $\decoherence$, which removes coherences from hidden states in BMs, leaving a diagonal matrix which encodes a latent RV. Decoherence operators permits the readout of latent RVs in a reversible manner, with marginalization of the latent RV yielding the original distribution (third diagram). (b) Examples of decohered Born machines (DBMs) based on a three-core TN with hidden edges $a$ and $b$. Choosing the decohered edge set $\decoherededges = \{ a \}$ results in a decoherence operator being placed in the location corresponding to edge $a$ in the composite TN, which allows the decohered edge to be expressed as a new latent RV. (c) Sketch of the proof of Theorem~\ref{thm:fdbm_as_ugm}, that every fully-decohered BM is equivalent to a UGM. Copy tensor rewriting rules permit the factorization of fully-decohered BMs into non-negative valued tensors which form clique potentials of an equivalent UGM. (d) Example of the conditional independence property for the DBM above with $\decoherededges = \{a\}$. Conditioning on $Z = z$ for the latent RV at decohered edge $a$ leads to the conditioning value being copied to all attached cores, which in turn leads to a factorization of the conditional distribution into two independent pieces.}
\label{fig:proofs}
\end{figure}

While the graphical structure of Born machines is useful for defining \emph{area laws}, which characterize the attainable mutual information between subsets of RVs~\cite{eisert2010colloquium,lu2021}, they do not permit the formulation of conditional independence results, something which is a major benefit of PGMs. A primary reason for this is the inability to freely assign latent RVs to the hidden edges of BMs without disturbing the original distribution. However, by accounting for this disturbance in a principled manner, it is possible to combine the representational advantages available to BMs with the conditional independence guarantees available to PGMs.

A crucial tool is the concept of decoherence, whereby all off-diagonal coherences of a hidden density matrix are set to zero, leaving only a probability distribution on the diagonals of the operator. This can be carried out by the action of a \emph{decoherence operator}, a fourth-order copy tensor acting on density matrices which we write as $\decoherence$ (Figure~\ref{fig:proofs}a). The operator $\decoherence$ is the natural result of converting a hidden edge of a BM into a latent RV and then marginalizing.  We can use this idea to decohere certain edges of a BM in advance, leading to the notion of \emph{decohered Born machine} models (Figure~\ref{fig:proofs}b).

\begin{definition}
\label{def:dbm}
A \emph{decohered Born machine} (DBM) consists of a Born machine over a graph $G$ along with a subset of hidden edges $\decoherededges \subseteq \hiddenedges$, referred to as the model's \emph{decohered edges}. The probability distribution represented by a DBM is given by the composite TN associated to the original BM, but with each pair of hidden edges in $\decoherededges$ replaced by a decoherence tensor $\decoherence$. A DBM for which $\decoherededges = \hiddenedges$ is referred to as a \emph{fully-decohered} Born machine.
\end{definition}

Having Definition~\ref{def:dbm} in hand, we would like to first characterize the expressivity of DBMs. It is clear that standard BMs are a special case of DBMs, where the decohered edge set is taken to be empty. On the other hand, we show in the following two results that fully-decohered BMs are equivalent in expressive power to discrete UGMs.

\begin{theorem}
\label{thm:fdbm_as_ugm}
\Copy{fdbm_as_ugm}{The probability distribution expressed by a fully-decohered Born machine with tensor cores $\TNnode$, one for each node $v \in V$, is identical to that of a discrete undirected graphical model with clique potentials of the same shape, and whose values are given by $\phi_{C}(x_{C}) = | \TNnode_{x_{C}} |^2$, where $x_C$ contains the RVs from all edges adjacent to $v$.}
\end{theorem}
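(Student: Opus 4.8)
The plan is to work directly with the composite TN that defines the fully-decohered BM distribution, and to repeatedly apply the copy-tensor rewriting rules of Section 3 until its two conjugate layers collapse into a single non-negative network. First I would write the composite TN out explicitly: it consists of a ``top'' layer of cores $\TNnode$, a ``bottom'' layer of conjugated cores $\TNnodestar$, a third-order copy tensor $\Delta_3$ merging each pair of visible edges into an output edge carrying an observed RV $X_i$, and---because the machine is fully decohered, with $\decoherededges = \hiddenedges$---a decoherence tensor $\decoherence = \Delta_4$ on every hidden edge $\edgevar$, joining the top and bottom copies of that edge at both of its incident nodes.

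The heart of the argument is to rewrite these copy tensors so that the top and bottom layers become locked mode-by-mode. Using the closure property of copy tensors (Figure~\ref{fig:TN_diagrams}g, \cite[Theorem 6.45]{fong2019invitation}), I would factor each fourth-order $\decoherence$ on a hidden edge $\edgevar = (v,v')$ into two third-order copy tensors joined by a single internal edge: one $\Delta_3$ attaching the $\edgevar$-mode of $\TNnode$ and of $\TNnodestar$, the other attaching the $\edgevar$-mode of $A^{(v')}$ and $A^{(v')*}$, with the connecting edge carrying a fresh latent RV $Y_\edgevar$. After this rewrite, every mode of every core---visible or hidden---is attached, together with the corresponding mode of its conjugate, to a single $\Delta_3$ whose remaining leg is an RV (observed $X_i$ for visible edges, latent $Y_\edgevar$ for hidden edges). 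Contracting a core $\TNnode$ and its conjugate $\TNnodestar$ against one such $\Delta_3$ on each incident mode identifies the top and bottom index on that mode, so the pair collapses to the non-negative tensor $\TNBnode$ whose elements $\TNBnode_{x_{\inc(v)}}$ are the squared magnitudes $|\TNnode_{x_{\inc(v)}}|^2$, i.e. the element-wise product of a core with its conjugate as in Figure~\ref{fig:TN_diagrams}i.

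What remains is a single-layer non-negative network whose nodes are the tensors $\TNBnode$, glued along latent-RV edges $Y_\edgevar$ and carrying dangling observed-RV edges $X_i$---precisely the dual TN form of a UGM (via the duality of Section~\ref{sec:ugm_as_nntn} and \cite{duality2018}) whose clique potential on the clique $C$ dual to $v$ is $\phi_C(x_C) = |\TNnode_{x_C}|^2$, a valid non-negative potential indexed by the edges adjacent to $v$. Marginalizing the latent RVs $Y_\edgevar$ (contracting their edges with $\Delta_1$) yields the unnormalized distribution on the observed RVs, whose total sum is exactly the UGM partition function $\normalization = \sum \prod_{C} \phi_C$; normalizing by $\normalization$ then gives the claimed equality of distributions. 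I expect the main obstacle to be the bookkeeping in the second step: verifying that the $\Delta_4$-to-$\Delta_3$ factorization and the per-mode index identifications are applied consistently across shared hidden edges, so that each latent RV $Y_\edgevar$ is introduced exactly once and each core is paired with exactly its own conjugate, and confirming that the decohered normalizer is $\normalization$ rather than the original Born normalizer $\lVert \psi \rVert_2^2$ (which indeed differs, reflecting the distribution-altering effect of decoherence).
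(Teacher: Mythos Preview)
Your proposal is correct and follows essentially the same route as the paper's proof: factor each $\decoherence = \Delta_4$ on a hidden edge into two $\Delta_3$'s, absorb one $\Delta_3$ into each adjacent pair $(\TNnode,\TNnodestar)$ to produce the non-negative core $\TNBnode$ with entries $|\TNnode|^2$, and then invoke the non-negative TN/UGM duality of~\cite{duality2018}. The paper's version is slightly terser---it does not name the latent variables $Y_\edgevar$ or discuss the normalizer explicitly---but the structure of the argument is the same.
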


The proof of Theorem~\ref{thm:fdbm_as_ugm} is given in the supplemental material, with the basic idea illustrated in Figure~\ref{fig:proofs}c. Each decoherence operator $\decoherence$ can be written as the product of two third-order copy tensors, each of which can be assigned to one pair of TN cores adjacent to the decohered edge. In the case that all edges of a DBM are decohered, these copy tensors allow each pair of cores $\TNnode$ and $\TNnodestar$ to be replaced by their element-wise product, giving an effective clique potential with non-negative values. The UGM formed by these clique potentials has the same graphical structure as the TN describing the original BM (up to graphical duality). Conversely, the correspondence given in Theorem~\ref{thm:fdbm_as_ugm} suggests a simple method for representing any discrete UGM as a fully-decohered BM.

\begin{corollary}
\label{thm:ugm_as_dbm}
\Copy{ugm_as_dbm}{The probability distribution of any discrete undirected graphical model with clique potentials $\phi_{C}(x_C)$ is identical to that of any fully-decohered Born machine with tensor cores of the same shape, and whose elements are given by $\TNnode_{x_C} = \exp\left( 2\pi i\, \theta_C(x_C) \right) \sqrt{\phi_{C}(x_C)}$, where $\theta_C$ can be any real-valued function, and where $v$ indicates the TN node dual to the clique $C$.}
\end{corollary}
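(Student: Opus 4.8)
The plan is to derive this as an immediate consequence of Theorem~\ref{thm:fdbm_as_ugm}, which already supplies the nontrivial direction of the correspondence. First I would take the candidate tensor cores $\TNnode_{x_C} = \exp\left(2\pi i\,\theta_C(x_C)\right)\sqrt{\phi_C(x_C)}$ and feed them directly into the fully-decohered Born machine construction of Definition~\ref{def:dbm}. By Theorem~\ref{thm:fdbm_as_ugm}, the probability distribution of the resulting fully-decohered BM coincides with that of a UGM whose clique potentials are $|\TNnode_{x_C}|^2$. It therefore suffices to check that these recovered potentials equal the original $\phi_C$, at which point the two distributions must agree.

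The remaining verification is an elementary modulus computation. Since $\phi_C(x_C) \geq 0$ by the definition of a UGM, the square root $\sqrt{\phi_C(x_C)}$ is a well-defined non-negative real number, and since $\theta_C$ is real-valued the phase factor satisfies $\left|\exp\left(2\pi i\,\theta_C(x_C)\right)\right| = 1$. Hence $|\TNnode_{x_C}|^2 = \left|\exp\left(2\pi i\,\theta_C(x_C)\right)\right|^2\,\bigl|\sqrt{\phi_C(x_C)}\bigr|^2 = \phi_C(x_C)$, so the clique potentials produced by Theorem~\ref{thm:fdbm_as_ugm} are exactly the ones we started with. Because the cores are assigned node-by-node under the UGM--TN duality already invoked in that theorem, their shapes match those of the clique potentials automatically, and the distributions agree up to the shared normalization that renders each a valid probability distribution.

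Finally I would emphasize that the phase function $\theta_C$ drops out of this computation entirely, which is precisely why the statement asserts equivalence for \emph{any} such Born machine rather than a canonical one: both decoherence and the Born rule depend only on the squared modulus of the cores, so all phase information is washed out. There is no genuine obstacle here beyond bookkeeping; the single point requiring care is confirming that the non-negativity constraint $\phi_C \geq 0$ intrinsic to UGMs is what guarantees $\sqrt{\phi_C}$ is real, so that the modulus identity holds cleanly and the arbitrary phase $\theta_C$ is exposed as a gauge redundancy of the decohered representation.
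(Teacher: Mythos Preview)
Your proposal is correct and matches the paper's own proof essentially line for line: the paper simply plugs the prescribed cores into Theorem~\ref{thm:fdbm_as_ugm}, notes that $|\TNnode_{x_C}|^2 = \phi_C(x_C)$ recovers the original clique potentials, and remarks that the complex phases $\exp(2\pi i\,\theta_C(x_C))$ have no impact on the decohered cores. Your write-up is slightly more explicit about the modulus computation and the role of non-negativity of $\phi_C$, but there is no substantive difference in approach.
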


Although standard BMs and UGMs operate very differently\textemdash and in the case of line graphs have been proven to have inequivalent expressive power~\cite{glasser2019}\textemdash we see that DBMs offer a unified means of representing both families of models with an identical parameterization. We further prove in the supplemental material that DBMs are equivalent in expressivity to \emph{locally purified states}, a model family generalizing both BMs and UGMs. 

Another motivation for the use of DBMs is in enabling conditional independence guarantees within the setting of quantum-inspired TN models. The ability to replace any decoherence operator by a fifth-order copy tensor with a visible edge lets us assign RVs to all decohered edges, such that marginalizing over these new RVs yields the original DBM distribution (Figure~\ref{fig:proofs}a). These new RVs behave identically to those of a UGM, letting us demonstrate a conditional independence property.

\begin{theorem}
\label{thm:cond_ind_dbm}
\Copy{cond_ind_dbm}{Consider a DBM with underlying graph $G$ and decohered edges $\decoherededges$, along with a subset $\cutedges \subseteq \decoherededges$ which forms a cut set for $G$. If we denote by $Z_C$ the set of RVs associated to $\cutedges$, and denote by $X_A$ and $X_B$ the sets of RVs associated to the two partitions of $G$ induced by the cut set $\cutedges$, then the DBM distribution satisfies the conditional independence property $\condind{X_A}{X_B}{Z_C}$.}
\end{theorem}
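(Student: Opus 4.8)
The plan is to reduce the claim to the factorization of a tensor network into two disconnected components, mirroring the classical UGM argument for conditional independence sketched in Figure~\ref{fig:copy}c but now carried out inside the composite TN of the Born machine, as illustrated in Figure~\ref{fig:proofs}d. First I would invoke the construction underlying Definition~\ref{def:dbm} and Figure~\ref{fig:proofs}a: because every edge of $\cutedges$ lies in $\decoherededges$, each associated decoherence operator $\decoherence$ may be replaced by a fifth-order copy tensor $\Delta_5$ carrying one additional visible edge. This exhibits an enlarged non-negative distribution $P(x_A, x_B, z_C)$ on the observed RVs together with the latent RVs $Z_C$ at the cut edges, whose marginal $\sum_{z_C} P(x_A, x_B, z_C)$ recovers the DBM distribution, since contracting each $\Delta_5$ with $\Delta_1$ on its extra leg returns the original fourth-order operator $\Delta_4 = \decoherence$. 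Conditioning on $Z_C = z_C$ then amounts to contracting each of these visible legs with the basis vector $e_{z}$.

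Next I would use the copying property of copy tensors. Contracting the visible leg of a $\Delta_5$ sitting on a cut edge with a basis vector $e_z$ collapses it to the rank-one tensor product $(e_z)^{\otimes 4}$, so the four remaining legs\textemdash two belonging to the $\psi$ copy and two to the conjugate $\psi^*$ copy of the composite TN\textemdash decouple entirely. Doing this simultaneously for all edges in $\cutedges$ severs every link that, in either copy, crosses the cut. Since $\cutedges$ is a cut set for $G$, removing these edges partitions $G$ into the two subgraphs carrying $X_A$ and $X_B$; the remaining decohered edges of $\decoherededges \setminus \cutedges$ and all visible edges lie strictly within one subgraph or the other, so the conditioned composite TN splits into two components with no shared edges. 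A disconnected TN evaluates to the tensor product of its components, giving a factorization $P(x_A, x_B \mid z_C) \propto f_A(x_A; z_C)\, f_B(x_B; z_C)$, from which normalization yields $\condind{X_A}{X_B}{Z_C}$.

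The main obstacle I anticipate is not the algebra but the bookkeeping of the doubled composite TN: I must verify that a cut set of the single graph $G$ genuinely induces a graph cut of the two-copy composite diagram once the decoherence tensors on the cut edges are expanded. The crucial observation making this work is that conditioning replaces $\Delta_5$ by the fully-factorized $(e_z)^{\otimes 4}$ rather than a mere projector, so each cut edge contributes the value $z$ independently to its $A$-side and $B$-side neighbors in both the $\psi$ and $\psi^*$ layers, leaving no residual coherence bridging the partition. I would also need to confirm that this severing isolates no observed visible leg, and to track normalization carefully, since the Born-rule normalizer $\lVert \psi \rVert_2^2$ and the conditional normalizer must factor compatibly across the two components.
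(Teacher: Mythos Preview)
Your proposal is correct and follows essentially the same route as the paper: condition on the latent RVs at the decohered cut edges, use the copying property so that each decoherence operator collapses to a tensor product of basis vectors feeding both sides (and both layers) of the composite TN, and conclude that the conditioned composite TN disconnects into two pieces whose tensor product yields $\condind{X_A}{X_B}{Z_C}$. Your treatment is in fact more explicit than the paper's about the $\Delta_5 \to (e_z)^{\otimes 4}$ step and the bookkeeping of the doubled diagram, but the underlying argument is the same.
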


While the complete proof of Theorem~\ref{thm:cond_ind_dbm} is given in the supplemental material, the idea is simple (Figure~\ref{fig:proofs}d). The insertion of decoherence operators, which are examples of copy tensors, into the composite TN giving the DBM distribution allows any basis vector $e_z$ used for conditioning to be copied to all edges incident to the copy tensor. This in turn removes any direct correlations between the nodes on either side of the decohered edge, so that conditioning on a collection of RVs associated with a cut set of decohered edges results in a factorization of the conditional composite TN into a tensor product of two independent pieces.

\section{Experiments}

\begin{figure}
\centering
\includegraphics[width=\textwidth]{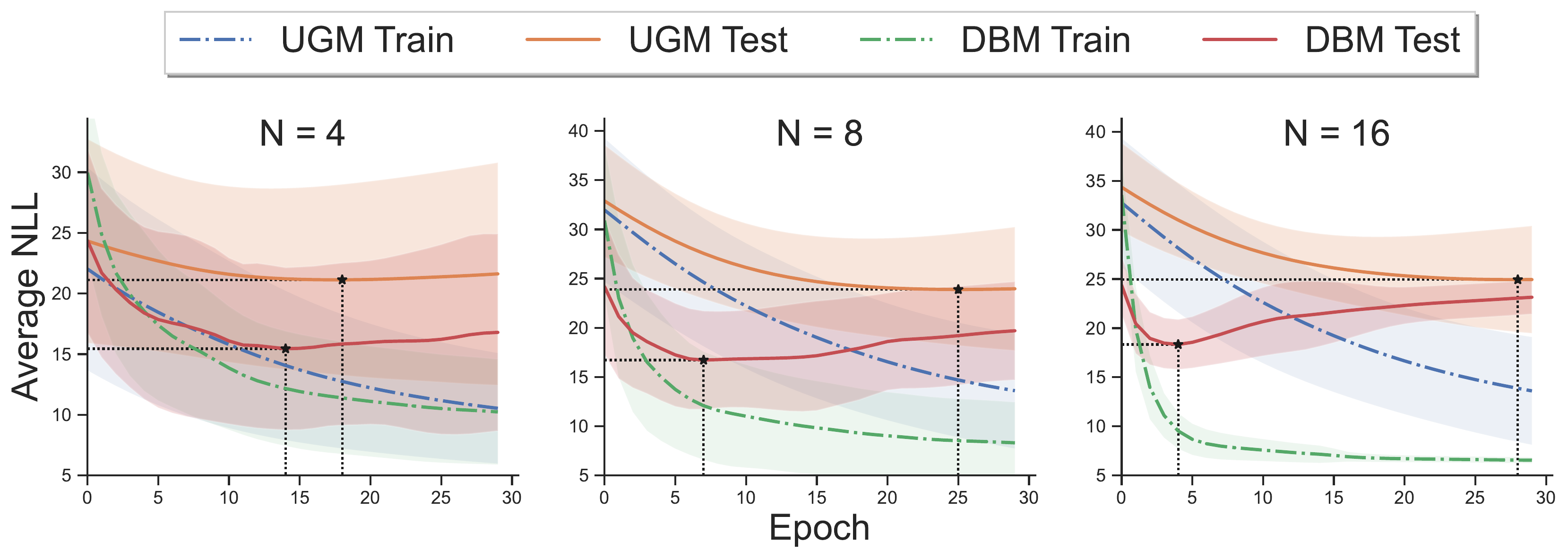}
\caption{DBM (red) vs. UGM (orange) models defined on a graph corresponding to a hidden Markov model (HMM). DBM exhibits a lower average negative log likelihood (NLL) on held-out data compared to an equivalent UGM on the Bars and Stripes dataset. Curves are means taken over 15 replications, shaded regions are 1 std dev. Star markers indicate minima. Left to right: increasing the number of hidden states $N$ in HMM leads to faster learning by the DBM relative to the UGM.}
\label{fig:hybrid_vs_hmm}
\end{figure}

Having discussed the \emph{representational} advantages of DBMs for structured discrete probability distributions, we now investigate the empirical advantages over UGMs with a similar graphical structure. We define both DBM and UGM models on an undirected version of the graph defining a hidden Markov model (HMM), and evaluate their relative performance on a sequential (flattened) version of the Bars and Stripes dataset~\cite{mackay2003information}. Bars and Stripes consists of two kinds of binary images: ones with only horizontal bars, and ones with only vertical stripes. Intuitively, we expect that the interference behavior available in DBMs can capture correlations more efficiently when the distribution being modeled exhibits regular periodic behavior. The 1D periodic patterns seen in the vertical stripe images give us an ideal setting for testing this out. 

Figure~\ref{fig:hybrid_vs_hmm} shows the results of optimizing a UGM and DBM to maximize marginal likelihood over observations of flattened $8\times 8$ images. Our models were implemented and trained with JAX~\cite{jax2018github}. The results are favorable to the DBM, which achieves both a lower average NLL on held-out data than the UGM, as well as a progressively smaller training times as the hidden dimensions $N$ of the models are increased. We conjecture that DBM models will more broadly have advantages in performance in modelling real-world data processes that exhibit long-term periodic behavior, such as geophysical processes, mechanical systems like human gait, and more general stochastic processes with some cyclical dynamics.

\section{Conclusion}
We use the physically-motivated notion of decoherence to define decohered Born machines (DBMs), a new family of probabilistic models that serve as a bridge between PGMs and TNs. As shown in Theorem~\ref{thm:fdbm_as_ugm} and Corollary~\ref{thm:ugm_as_dbm}, fully decohering a BM gives rise to a UGM, and conversely any subgraph of a UGM can be viewed as the decohered version of some BM. Crucial to this back-and-forth passage is the use of copy tensors, which further allows conditional independence guarantees in the context of TN modeling and provides an additional correspondence between the two modeling frameworks. An immediate limitation of our results surrounding DBMs is the focus on UGMs only. An extension to directed graphical models is left for future work, as is a deeper investigation into what kinds of problems could most benefit from utilizing quantum interference effects in the manner proposed. It is possible that DBMs would improve the performance of existing graphical model inference and learning algorithms by replacing sub-regions of the model with quantum-style ingredients, although a more systematic exploration of this question is needed. The integration of ``classical'' and ``quantum'' ingredients represented by a DBM further makes it a natural candidate for quantum machine learning, as decoherence represents a natural form of noise present in quantum hardware in the noisy intermediate-scale quantum (NISQ) era \cite{Preskill2018quantumcomputing}.

\begin{ack}
The authors thank Guillaume Verdon, Antonio Martinez, and Stefan Leichenauer for helpful discussions, and Jae Hyeon Yoo for engineering support. Geoffrey Roeder is supported in part by the National Sciences and Engineering Research Council of Canada (grant no. PGSD3-518716-2018).
\end{ack}

\bibliographystyle{plain}
\bibliography{references}

\begin{thebibliography}{10}

\bibitem{adhikary2021quantum}
Sandesh Adhikary, Siddarth Srinivasan, Jacob Miller, Guillaume Rabusseau, and
  Byron Boots.
\newblock Quantum tensor networks, stochastic processes, and weighted automata.
\newblock In {\em International Conference on Artificial Intelligence and
  Statistics}, pages 2080--2088. PMLR, 2021.

\bibitem{bailly2011quadratic}
Raphael Bailly.
\newblock Quadratic weighted automata: Spectral algorithm and likelihood
  maximization.
\newblock In {\em Asian Conference on Machine Learning}, pages 147--163. PMLR,
  2011.

\bibitem{benedetti2019generative}
Marcello Benedetti, Delfina Garcia-Pintos, Oscar Perdomo, Vicente
  Leyton-Ortega, Yunseong Nam, and Alejandro Perdomo-Ortiz.
\newblock A generative modeling approach for benchmarking and training shallow
  quantum circuits.
\newblock {\em npj Quantum Information}, 5(1):1--9, 2019.

\bibitem{jax2018github}
James Bradbury, Roy Frostig, Peter Hawkins, Matthew~James Johnson, Chris Leary,
  Dougal Maclaurin, George Necula, Adam Paszke, Jake Vander{P}las, Skye
  Wanderman-{M}ilne, and Qiao Zhang.
\newblock {JAX}: composable transformations of {P}ython+{N}um{P}y programs,
  2018.

\bibitem{BST_2020}
Tai-Danae Bradley, E~M Stoudenmire, and John Terilla.
\newblock Modeling sequences with quantum states: a look under the hood.
\newblock {\em Machine Learning: Science and Technology}, 1(3):035008, 2020.

\bibitem{cichocki2016tensor}
Andrzej Cichocki, Namgil Lee, Ivan Oseledets, Anh-Huy Phan, Qibin Zhao, and
  Danilo~P Mandic.
\newblock Tensor networks for dimensionality reduction and large-scale
  optimization: Part 1 low-rank tensor decompositions.
\newblock {\em Foundations and Trends in Machine Learning}, 9(4-5):249--429,
  2016.

\bibitem{coecke_pavlovic_vicary_2013}
Bob Coecke, Dusko Pavlovic, and Jamie Vicary.
\newblock A new description of orthogonal bases.
\newblock {\em Mathematical Structures in Computer Science}, 23(3):555–567,
  2013.

\bibitem{cohen2016expressive}
Nadav Cohen, Or~Sharir, and Amnon Shashua.
\newblock On the expressive power of deep learning: A tensor analysis.
\newblock In {\em Conference on Learning Theory}, pages 698--728. PMLR, 2016.

\bibitem{demarr1972}
Ralph DeMarr.
\newblock Nonnegative matrices with nonnegative inverses.
\newblock {\em Proceedings of the American Mathematical Society},
  35(1):307--308, 1972.

\bibitem{eisert2010colloquium}
Jens Eisert, Marcus Cramer, and Martin~B Plenio.
\newblock Colloquium: Area laws for the entanglement entropy.
\newblock {\em Reviews of Modern Physics}, 82(1):277, 2010.

\bibitem{FV2012}
Andrew~J. Ferris and Guifre Vidal.
\newblock Perfect sampling with unitary tensor networks.
\newblock {\em Phys. Rev. B}, 85:165146, 2012.

\bibitem{fong2019invitation}
Brendan Fong and David~I. Spivak.
\newblock {\em An Invitation to Applied Category Theory: Seven Sketches in
  Compositionality}.
\newblock Cambridge University Press, 2019.

\bibitem{gao2021enhancing}
Xun Gao, Eric~R Anschuetz, Sheng-Tao Wang, J~Ignacio Cirac, and Mikhail~D
  Lukin.
\newblock Enhancing generative models via quantum correlations.
\newblock {\em arXiv preprint arXiv:2101.08354}, 2021.

\bibitem{gao2017efficient}
Xun Gao, Zhengyu Zhang, and Luming Duan.
\newblock An efficient quantum algorithm for generative machine learning.
\newblock {\em arXiv preprint arXiv:1711.02038}, 2017.

\bibitem{glasser2019}
Ivan Glasser, Ryan Sweke, Nicola Pancotti, Jens Eisert, and J.~Ignacio Cirac.
\newblock Expressive power of tensor-network factorizations for probabilistic
  modeling.
\newblock In {\em Advances in Neural Information Processing Systems 32}, pages
  1496--1508, 2019.

\bibitem{bornmachine2018}
Zhao-Yu Han, Jun Wang, Heng Fan, Lei Wang, and Pan Zhang.
\newblock Unsupervised generative modeling using matrix product states.
\newblock {\em Phys. Rev. X}, 8:031012, 2018.

\bibitem{harris2020}
Charles~R Harris, K~Jarrod Millman, St{\'e}fan~J van~der Walt, Ralf Gommers,
  Pauli Virtanen, David Cournapeau, Eric Wieser, Julian Taylor, Sebastian Berg,
  Nathaniel~J Smith, et~al.
\newblock Array programming with {N}um{P}y.
\newblock {\em Nature}, 585(7825):357--362, 2020.

\bibitem{Hunter:2007}
J.~D. Hunter.
\newblock Matplotlib: A 2d graphics environment.
\newblock {\em Computing in Science \& Engineering}, 9(3):90--95, 2007.

\bibitem{kingma2014adam}
Diederik~P Kingma and Jimmy Ba.
\newblock Adam: A method for stochastic optimization.
\newblock {\em arXiv preprint arXiv:1412.6980}, 2014.

\bibitem{leifer2008quantum}
Matthew~S Leifer and David Poulin.
\newblock Quantum graphical models and belief propagation.
\newblock {\em Annals of Physics}, 323(8):1899--1946, 2008.

\bibitem{levine2018deep}
Yoav Levine, David Yakira, Nadav Cohen, and Amnon Shashua.
\newblock Deep learning and quantum entanglement: Fundamental connections with
  implications to network design.
\newblock In {\em International Conference on Learning Representations}, 2018.

\bibitem{lu2021}
Sirui Lu, M{\'a}rton Kan{\'a}sz-Nagy, Ivan Kukuljan, and J~Ignacio Cirac.
\newblock Tensor networks and efficient descriptions of classical data.
\newblock {\em arXiv preprint arXiv:2103.06872}, 2021.

\bibitem{mackay2003information}
David~JC MacKay.
\newblock {\em Information theory, inference and learning algorithms}.
\newblock Cambridge University Press, 2003.

\bibitem{miller2021tensor}
Jacob Miller, Guillaume Rabusseau, and John Terilla.
\newblock Tensor networks for probabilistic sequence modeling.
\newblock In {\em International Conference on Artificial Intelligence and
  Statistics}, 2021.

\bibitem{mora2011biological}
Thierry Mora and William Bialek.
\newblock Are biological systems poised at criticality?
\newblock {\em Journal of Statistical Physics}, 144(2):268--302, 2011.

\bibitem{novikov2015tensorizing}
Alexander Novikov, Dmitry Podoprikhin, Anton Osokin, and Dmitry Vetrov.
\newblock Tensorizing neural networks.
\newblock In {\em Proceedings of the 28th International Conference on Neural
  Information Processing Systems-Volume 1}, pages 442--450, 2015.

\bibitem{novikov2017exponential}
Alexander Novikov, Mikhail Trofimov, and Ivan Oseledets.
\newblock Exponential machines.
\newblock In {\em International Conference on Learning Representations}, 2017.

\bibitem{Orus2014}
Rom\'an Oru\'s.
\newblock {A Practical Introduction to Tensor Networks: Matrix Product States
  and Projected Entangled Pair States}.
\newblock {\em Annals Phys.}, 349:117--158, 2014.

\bibitem{penrose71}
Roger Penrose.
\newblock Applications of negative dimensional tensors.
\newblock {\em Combinatorial Mathematics and its Applications}, 1:221--244,
  1971.

\bibitem{pestun2017language}
Vasily Pestun, John Terilla, and Yiannis Vlassopoulos.
\newblock Language as a matrix product state.
\newblock {\em arXiv preprint arXiv:1711.01416}, 2017.

\bibitem{pestun2017tensor}
Vasily Pestun and Yiannis Vlassopoulos.
\newblock Tensor network language model.
\newblock {\em arXiv preprint arXiv:1710.10248}, 2017.

\bibitem{Preskill2018quantumcomputing}
John Preskill.
\newblock Quantum {C}omputing in the {NISQ} era and beyond.
\newblock {\em {Quantum}}, 2:79, 2018.

\bibitem{duality2018}
Elina Robeva and Anna Seigal.
\newblock {Duality of Graphical Models and Tensor Networks}.
\newblock {\em Information and Inference: A Journal of the IMA}, 8(2):273--288,
  06 2018.

\bibitem{stokes_terilla}
James Stokes and John Terilla.
\newblock Probabilistic modeling with matrix product states.
\newblock {\em Entropy}, 21(12), 2019.

\bibitem{stoudenmire2018learning}
E~Miles Stoudenmire.
\newblock Learning relevant features of data with multi-scale tensor networks.
\newblock {\em Quantum Science and Technology}, 3(3):034003, 2018.

\bibitem{milesdavid}
EM~Stoudenmire and David~J Schwab.
\newblock Supervised learning with tensor networks.
\newblock {\em Advances in Neural Information Processing Systems}, pages
  4806--4814, 2016.

\bibitem{verstraete2008matrix}
Frank Verstraete, Valentin Murg, and J~Ignacio Cirac.
\newblock Matrix product states, projected entangled pair states, and
  variational renormalization group methods for quantum spin systems.
\newblock {\em Advances in Physics}, 57(2):143--224, 2008.

\bibitem{wang2019bert}
Alex Wang and Kyunghyun Cho.
\newblock Bert has a mouth, and it must speak: Bert as a markov random field
  language model.
\newblock {\em NAACL HLT 2019}, page~30, 2019.

\bibitem{wang2013markov}
Chaohui Wang, Nikos Komodakis, and Nikos Paragios.
\newblock Markov random field modeling, inference \& learning in computer
  vision \& image understanding: A survey.
\newblock {\em Computer Vision and Image Understanding}, 117(11):1610--1627,
  2013.

\bibitem{zhao2010norm}
Ming-Jie Zhao and Herbert Jaeger.
\newblock Norm-observable operator models.
\newblock {\em Neural Computation}, 22(7):1927--1959, 2010.

\bibitem{zurek2003decoherence}
Wojciech~Hubert Zurek.
\newblock Decoherence, einselection, and the quantum origins of the classical.
\newblock {\em Reviews of Modern Physics}, 75(3):715, 2003.

\end{thebibliography}

\newpage
\appendix

\section{Decohered Born Machines Compute Unnormalized Probability Distributions}

Here we show that every decohered Born machine (DBM) defines a valid (unnormalized) probability distribution, that is, that the tensor elements of a DBM are non-negative. This can be seen from the fact that the probability distribution represented by a DBM is obtainable as a marginalization of the distribution represented by a larger Born machine (BM) model. By definition, a DBM is a BM $\psi$ with the property that a subset $E_D$ of the set $E_H$ of the hidden edges of $\psi$ are decohered. Recall that decoherence here involves the contraction of $k$ third-order copy tensors $\Delta_3$, where $k$ is the size of the set $E_D$, and observe that such contraction can be achieved by marginalizing over new latent variables $Z_1,\ldots,Z_k$ introduced in the corresponding hidden edges. The tensor elements of the DBM will then take the form $\sum_{z_1,\ldots,z_k} |\psi'_{x_1,\ldots,x_n,\ldots,z_1,\ldots,z_k}|^2$, where $\psi'$ is associated to a larger BM containing all copy tensors $\Delta_3$ associated with decohered edges. These tensor elements are clearly non-negative, proving that DBMs always describe non-negative tensors.

\section{Expressivity of Decohered Born Machines}

\begin{figure}
\centering
\includegraphics[width=\textwidth]{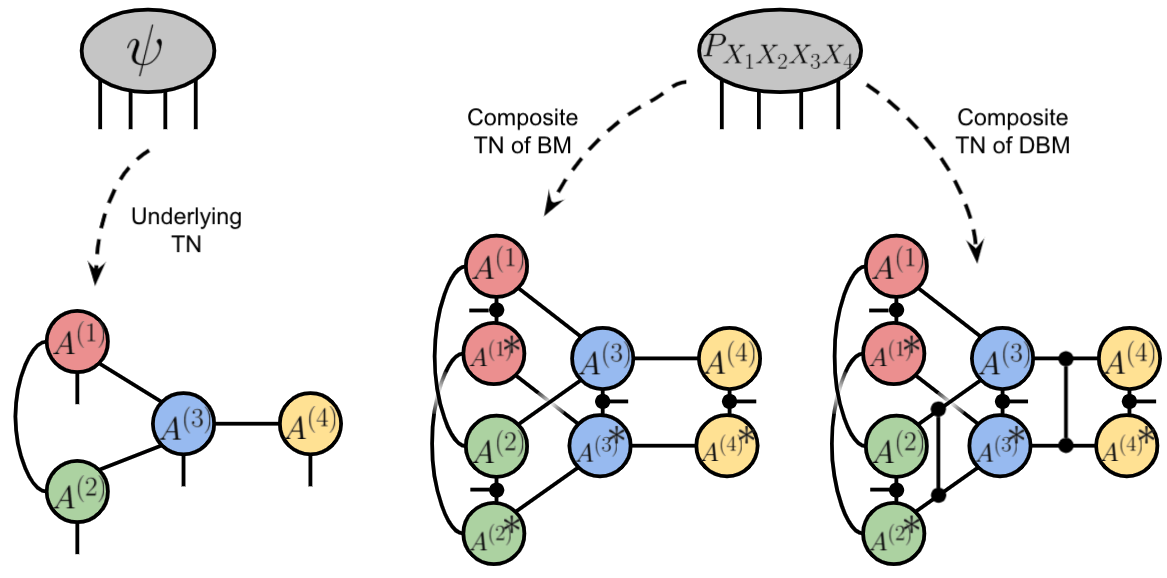}
\caption{A tensor network (TN) associated with a graph $G = (V, E)$ with $V = \{ 1, 2, 3, 4 \}$ and $E = \{ \edgevar_{1,2}, \edgevar_{2,3}, \edgevar_{1,3}, \edgevar_{3,4}, \edgevar_{1}, \edgevar_{2}, \edgevar_{3}, \edgevar_{4} \}$. The edge set $E$ is partitioned into visible edges $\visibleedges = \{ \edgevar_{1}, \edgevar_{2}, \edgevar_{3}, \edgevar_{4} \}$ and hidden edges $\hiddenedges = \{ \edgevar_{1,2}, \edgevar_{2,3}, \edgevar_{1,3}, \edgevar_{3,4} \}$, with the four visible edges giving a fourth-order tensor $\psi$. A Born machine (BM) uses two copies of this underlying TN, one with all cores complex-conjugated, to define a fourth-order composite TN whose associated tensor $P_{X_1 X_2 X_3 X_4}$ is an unnormalized probability distribution associated with four random variables. A decohered Born machine (DBM) uses a similar composite TN, but allows for a decoherence operator to be inserted in some edges, as specified by a set $\decoherededges \subseteq \hiddenedges$. In the composite TN shown, $\decoherededges = \{ \edgevar_{2,3}, \edgevar_{3,4} \}$.}
\label{fig:figure6}
\end{figure}

We prove several results which give a general characterization of the expressivity of decohered Born machines (DBMs), showing them to be capable of reproducing a range of classical and quantum-inspired probabilistic models. In Section~\ref{subsec:fdbm_as_ugm}, we prove that fully-decohered Born machines are equivalent in expressivity to undirected graphical models (UGMs), with the equivalence in question preserving the number of parameters of the two model classes. This result, which parallels the tautological equivalence of non-decohered DBMs and standard BMs, is followed in Section~\ref{subsec:dbm_as_lps} by a result showing the equivalence of DBMs and locally purified states (LPS), an expressive model class introduced in~\cite{glasser2019}.

We first review the definition of a DBM and some terminology for its graphical structure. Tensor networks (TNs) are defined in terms of a graph $G = (V, E)$ whose edges are allowed to be incident to either two or one nodes in $V$, and we will refer to the respective disjoint sets of edges as \emph{hidden edges} $\hiddenedges \subseteq E$ and \emph{visible edges} $\visibleedges \subseteq E$. Visible edges are associated with the modes of the tensor described by the TN, with the number of visible edges in $G$ equal to the order of the tensor. Nodes which aren't incident to any visible edges are referred to as \emph{hidden nodes} of the TN. Every node $v \in V$ is associated with a tensor \emph{core} $\TNnode$ of the TN, with the order of $\TNnode$ being equal to the degree of $v$ within $G$.

Recall that every BM is completely determined by a TN description of a higher-order tensor $\psi$, whose values are then converted into probabilities via the Born rule. We call the TN describing $\psi$ the \emph{underlying TN}, and the Born rule implies that the probability distribution can be described as a single \emph{composite TN} formed from two copies of the underlying TN, with all pairs of visible edges joined by copy tensors $\Delta_3$. We sometimes use the phrase \emph{composite edge} to refer to any pair of ``doubled up'' edges in the composite TN, in which case the composite TN can be seen as occupying the same graph as the underlying TN, but where each hidden edge corresponds to a composite edge. The probability distribution defined by a DBM is given by replacing certain composite edges in the composite TN by decoherence operators $\decoherence$, according to whether those edges belong to a set of \emph{decohered edges} $\decoherededges$ (Figure~\ref{fig:figure6}).

\subsection{Proof of Theorem~\ref{thm:fdbm_as_ugm}}
\label{subsec:fdbm_as_ugm}

We first restate Theorem~\ref{thm:fdbm_as_ugm}, before providing a complete proof.

\newtheorem*{thm:repeat_one}{Theorem~\ref{thm:fdbm_as_ugm}}
\begin{thm:repeat_one}
\Paste{fdbm_as_ugm}
\end{thm:repeat_one}

\begin{figure}
\centering
\includegraphics[width=\textwidth]{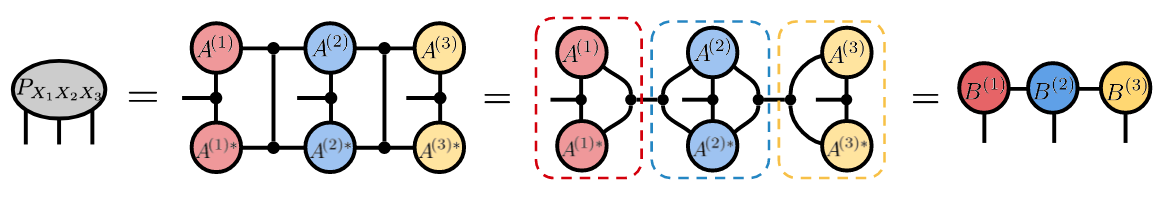}
\caption{Example of Theorem~\ref{thm:fdbm_as_ugm} showing the conversion of a fully-decohered BM into an equivalent UGM. By rewriting each decoherence operator as a product of third-order copy tensors, we can rewrite every pair of BM core tensors $\TNnode$ and $\TNnodestar$ as a single core tensor $\TNBnode$, whose values are guaranteed to be non-negative. This can consequently be used as a clique potential for a UGM.}
\label{fig:figure7}
\end{figure}

\begin{proof}

We show that the composite TN defining the probability distribution of a fully-decohered BM can be rewritten as a TN on the same graph $G$ as the underlying TN, with identical bond dimensions but where all cores take non-negative values. By virtue of the equivalence of non-negative TNs and UGMs~\cite[Theorem 2.1]{duality2018}, this suffices to prove Theorem~\ref{thm:fdbm_as_ugm}.

Fully-decohered BMs are defined as DBMs for which $\decoherededges = \hiddenedges$, so that every composite edge within the composite TN has been replaced by a decoherence operator $\decoherence$. Since $\decoherence = \Delta_4$, we can use the equality of different connected networks of copy tensors (Figure~\ref{fig:TN_diagrams}g) to express $\decoherence$ as a contraction of two third-order copy tensors $\Delta_3$ along a single edge. Decohered edges are hidden edges and are therefore incident to two distinct (pairs of) nodes in the composite TN. This allows us to move each copy of $\Delta_3$ onto a separate pair of nodes incident to the composite edge (Figure~\ref{fig:figure7}). We group together each pair of nodes $\TNnode$ and $\TNnodestar$, along with all copy tensors $\Delta_3$ incident to it, and contract each of these groups into a single tensor, which we call $\TNBnode$.

It is clear that each tensor $\TNBnode$ consists of a pair of cores $\TNnode$ and $\TNnodestar$ with all pairs of edges joined together by separate copies of $\Delta_3$. Since this arrangement of copy tensors corresponds to the element-wise product of $\TNnode$ and $\TNnodestar$, this implies that the elements of $\TNBnode$ satisfy $\TNBnode_{x_{\inc(v)}} = \left| \TNnode_{x_{\inc(v)}} \right|^2$, with $x_{\inc(v)}$ denoting the collection of indices associated with the edges incident to $v$ (these correspond to $x_C$ for some clique $C$ in the dual graph). Since each core $\TNBnode$ has the same shape as $\TNnode$, has non-negative values, and is arranged in a TN with the same graph as the underlying TN, this proves Theorem~\ref{thm:fdbm_as_ugm}.

\end{proof}

\subsection{Proof of Corollary~\ref{thm:ugm_as_dbm}}

\newtheorem*{thm:repeat_two}{Corollary~\ref{thm:ugm_as_dbm}}
\begin{thm:repeat_two}
\Paste{ugm_as_dbm}
\end{thm:repeat_two}

\begin{proof}

The statement of Corollary~\ref{thm:ugm_as_dbm} gives an explicit formula for constructing BM cores $\TNnode$ from clique potentials $\phi_C$, using an arbitrary real-valued tensor $\theta_C$. It can be immediately verified that the conversion from BM cores $\TNnode$ to effective clique potentials under full decoherence (Theorem~\ref{thm:fdbm_as_ugm}) recovers the same clique potentials we had started with, proving Corollary~\ref{thm:ugm_as_dbm}. Note that the values of the complex phases $\exp\left( 2\pi i\, \theta_C(x_C) \right)$ have no impact on the decohered cores.

\end{proof}

\subsection{Decohered Born Machines are Equivalent to Locally Purified States}
\label{subsec:dbm_as_lps}

\begin{figure}
\centering
\includegraphics[width=\textwidth]{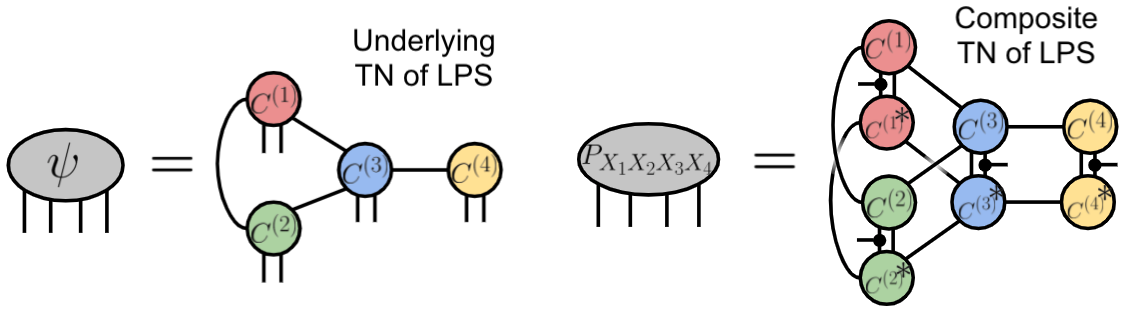}
\caption{A locally purified state (LPS) model is similar to a BM, but with an additional purification edge added to each node of the underlying TN. Although a small graphical change, this gives LPS greater expressive capabilities than BMs~\cite{glasser2019}. We show here that DBMs are equivalent to LPS in expressivity.}
\label{fig:figure8}
\end{figure}

Although the definition of locally purified states (LPS) in~\cite{glasser2019} assumes a one-dimensional line graph for the TN, we give here a natural generalization to LPS defined on more general graphs.

\begin{definition}
A locally purified state (LPS) consists of a tensor network over a graph $G$ containing $2n$ visible edges, where all cores contain exactly two visible edges, one of which is designated as a \emph{purification edge}, and the set of purification edges is denoted by $E_P \subseteq \visibleedges$. The $n$-variable probability distribution defined by an LPS is given by constructing the composite TN for a BM from these cores, with order $2n$, then marginalizing over all $n$ purification edges.
\end{definition}

An illustration of this model family is given in Figure~\ref{fig:figure8}. By choosing all purification edges to have dimension 1, LPS reproduce standard BMs, whereas~\cite[Lemma 3]{glasser2019} gives a construction allowing LPS to reproduce probability distributions defined by general UGM. Owing to this expressiveness, and to corresponding results for uniform variants of LPS~\cite{adhikary2021quantum}, we can think of LPS as representing the most general family of quantum-inspired probabilistic models. We now prove that DBMs are equivalent in expressivity to LPS, by first showing that LPS can be expressed as DBMs (Theorem~\ref{thm:lps_as_dbm}), and then showing that DBMs can be expressed as LPS (Theorem~\ref{thm:dbm_as_lps}).

\begin{theorem}
\label{thm:lps_as_dbm}
Consider an LPS whose underlying TN uses a graph $G = (V, E)$ with $n$ nodes, $2n$ visible edges, and $m$ hidden edges. The probability distribution represented by this LPS can be reproduced by a DBM over a graph with $2n$ nodes, $n$ visible edges, and $m+n$ hidden edges, where the decohered edge set $\decoherededges$ is in one-to-one correspondence with the purification edges $E_P$ of the LPS.
\end{theorem}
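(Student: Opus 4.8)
The plan is to build the required DBM directly from the given LPS by converting each purification edge into a decohered hidden edge terminating in a trivial ``cap'' node carrying the all-ones vector $\Delta_1$. Concretely, for each LPS core $\TNnode$---which carries one physical visible edge, one purification visible edge, and its incident hidden edges---I would retain $\TNnode$ as a core of the DBM's underlying TN, keeping its physical edge as a genuine visible edge and its $m$ original hidden edges unchanged, but reinterpret its purification leg as a new hidden edge joining $\TNnode$ to a freshly introduced node whose core is the all-ones vector $\Delta_1$ of dimension equal to the purification dimension. Setting $\decoherededges$ to be exactly these $n$ new edges yields the claimed one-to-one correspondence with the purification edges $E_P$. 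The resulting underlying graph has the $n$ original cores plus $n$ cap nodes, hence $2n$ nodes; its $n$ physical edges remain visible; and it has the $m$ original hidden edges together with the $n$ new ones, giving $m+n$ hidden edges, matching the statement.

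It then remains to verify that the two distributions coincide. Recall that the LPS distribution is obtained from the order-$2n$ BM composite TN by marginalizing the $n$ purification edges, which for each node $v$ replaces the local factor by $\sum_{p}\TNnode_{x_v,p,\ldots}\,\TNnodestar_{x_v,p,\ldots}$, the partial trace of the $v$-th core against its conjugate over the purification index. On the DBM side the original $m$ hidden edges are left undecohered, so they contract ket-with-ket and bra-with-bra exactly as in the LPS composite; the only modification occurs at the $n$ new edges. Expanding $\decoherence=\Delta_4$ and tracking indices, decoherence of the $\TNnode$--cap composite edge forces the ket and bra purification indices of $\TNnode$ to coincide and be summed, while the two all-ones cap legs each contribute a factor of $1$. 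Hence the node-$v$ factor collapses to exactly $\sum_{p}\TNnode_{x_v,p,\ldots}\,\TNnodestar_{x_v,p,\ldots}$, and since this holds at every node the two composite TNs compute the same tensor, proving the distributions agree.

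The step I expect to demand the most care is this last identity: that decoherence on a capped edge implements the purification partial trace rather than something weaker. The subtlety is that \emph{without} decoherence the all-ones cap would sum the ket and bra purification indices \emph{independently}, producing $\bigl|\sum_{p}\TNnode_{x_v,p,\ldots}\bigr|^2$ instead of the desired $\sum_{p}|\TNnode_{x_v,p,\ldots}|^2$; it is precisely the copy-tensor constraint encoded in $\decoherence=\Delta_4$ that fuses the two indices into a single summed index and recovers the defining marginalization of the LPS. Making this rigorous is a routine index computation once $\decoherence$ is written as $\Delta_4$ or split into two $\Delta_3$ copy tensors as in the proof of Theorem~\ref{thm:fdbm_as_ugm}, so the remainder of the argument is bookkeeping over the unchanged hidden edges.
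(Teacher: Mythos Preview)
Your proposal is correct and follows essentially the same construction as the paper: attach a $\Delta_1$ ``cap'' node to each purification edge to turn it into a decohered hidden edge, then verify via copy-tensor identities that decoherence on these capped edges reproduces the purification partial trace. Your explicit discussion of why decoherence is essential (fusing the ket/bra indices to get $\sum_p |\TNnode_{x_v,p,\ldots}|^2$ rather than $|\sum_p \TNnode_{x_v,p,\ldots}|^2$) is more detailed than the paper's own argument, which defers this step to a figure and the phrase ``copy tensor rewriting rules.''
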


\begin{proof}

Starting with a given LPS, we construct a TN matching the description in the Theorem statement, whose interpretation as a DBM will recover the desired distribution. We begin with the underlying TN for the LPS, whose $n$ nodes each have one purification edge. We connect each purification edge to a new hidden node, whose associated tensor is the first-order copy tensor $\Delta_1$ with dimension equal to that of the purification edge. This converts all of the purification edges into hidden edges, which form the decohered edges of the DBM.

Given this new TN and choice of decoherence edges, the equivalence of the DBM distribution and the original LPS distribution arises from inserting decoherence operators $\decoherence$ in the composite edges connected to the new hidden nodes, and then using copy tensor rewriting rules to express the composite TN of the DBM as that of the LPS (Figure~\ref{fig:figure9}a). Given that the new hidden nodes are associated with constant tensors with no free parameters, and given that all of the cores defining the LPS are kept unchanged in the DBM, the overall parameter count is unchanged. This completes the proof of Theorem~\ref{thm:lps_as_dbm}.

\end{proof}

\begin{figure}
\centering
\includegraphics[width=0.7\textwidth]{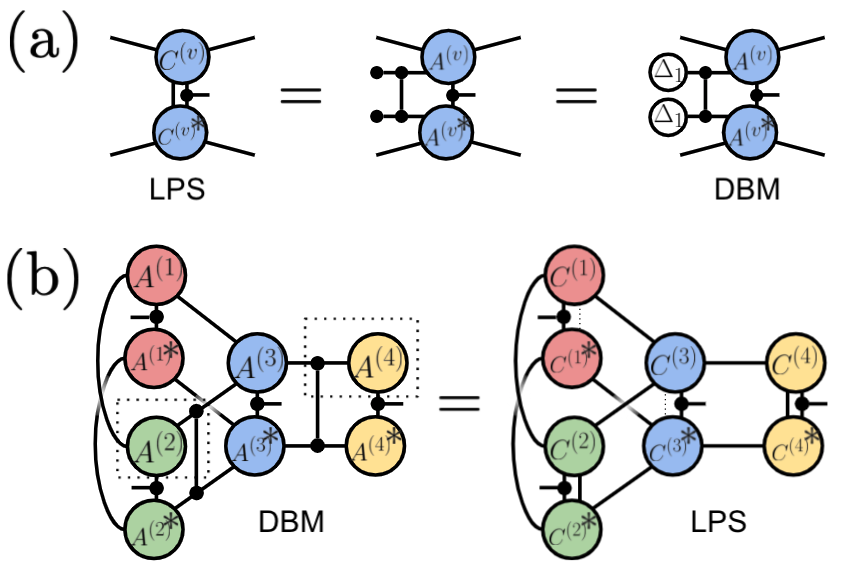}
\caption{(a) Conversion from an LPS to a DBM. Using diagram rewriting rules, each purification edge joining a pair of LPS cores is expressed as a larger network of copy tensors, which allows the edge to be seen as a decoherence operator $\decoherence$ between the original pair of nodes and a new pair of dummy nodes $\Delta_1$. The result is a DBM associated with an underlying TN with twice as many nodes, and one decohered edge for every purification edge in the LPS (b) Conversion from a DBM to an LPS. In this case, we choose a function $f$ mapping the decohered edges $\edgevar_{2, 3}$ and $\edgevar_{3, 4}$ to nodes 2 and 4, respectively. The dotted boxes show how this can be viewed as defining new cores $C^{(2)}$ and $C^{(4)}$ as the contraction of the DBM cores $A^{(2)}$ and $A^{(4)}$ with adjacent copy tensors $\Delta_3$. The result is an LPS, where we have used dotted edges to indicate trivial purification edges of dimension 1.}
\label{fig:figure9}
\end{figure}

\begin{theorem}
\label{thm:dbm_as_lps}
Consider a DBM defined on a graph $G = (V, E)$ with $n$ nodes and a set of decohered edges $\decoherededges \subseteq \hiddenedges$. Given any function $f: \decoherededges \to V$ assigning decohered edges to nodes of $G$ incident to those edges, we can construct an LPS with $n$ nodes which represents the same probability distribution as the DBM. This LPS is defined by a TN with an identical graphical structure to the TN underlying the original DBM, but with the addition of a purification edge at each node $v$ of dimension $d^{(v)} = \prod_{\edgevar \in f^{-1}(v)} d_\edgevar$, where $d_\edgevar$ is the bond dimension of edge $\edgevar$ and $f^{-1}(v)$ is the set of decohered edges mapped to node $v$.
\end{theorem}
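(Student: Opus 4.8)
The plan is to construct the claimed LPS explicitly and then verify, using copy tensor rewriting rules, that its composite TN coincides with that of the given DBM; this is the reverse of the construction in Theorem~\ref{thm:lps_as_dbm} and is sketched in Figure~\ref{fig:figure9}b. First I would fix the function $f: \decoherededges \to V$ and, for each node $v \in V$, define a new core $C^{(v)}$ by contracting the DBM core $\TNnode$ with one third-order copy tensor $\Delta_3$ for every decohered edge $\edgevar \in f^{-1}(v)$. Each such $\Delta_3$ is attached to the $\edgevar$-mode of $\TNnode$ and splits it into two legs of dimension $d_\edgevar$: one retains the role of the hidden bond to the opposite endpoint of $\edgevar$, while the other becomes a dangling purification leg. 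Grouping all purification legs produced at $v$ into a single edge yields a purification edge of dimension $d^{(v)} = \prod_{\edgevar \in f^{-1}(v)} d_\edgevar$, with $f^{-1}(v) = \emptyset$ giving a trivial purification edge of dimension $1$. Since each splitting copy tensor preserves the original hidden bond and only adjoins a new leg, the resulting TN has the same graph as the DBM's underlying TN together with exactly one purification edge per node, matching the stated graphical structure; the parameter-count and dimension claims are then immediate.

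Next I would form the LPS composite TN from the cores $C^{(v)}$ — two conjugate copies joined along the visible edges, followed by marginalization of all purification edges — and show it equals the DBM composite TN, in which each $\edgevar \in \decoherededges$ carries a decoherence operator $\decoherence = \Delta_4$. The key is a single decohered edge $\edgevar$ with $f(\edgevar) = v$ and opposite endpoint $u$. On the $v$ side, the purification leg of $C^{(v)}$ is shared between the bra and ket copies and summed over by marginalization; together with the splitting $\Delta_3$ this forces the bra bond index, the ket bond index, and the purification index to agree. On the $u$ side the core is unchanged, so $\edgevar$ contributes an ordinary pair of bra and ket bonds. Identifying these bonds across the edge then forces all four indices to coincide, reproducing exactly the four-way identification $\sum_k (e_k)^{\otimes 4}$ effected by $\decoherence$. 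Because distinct decohered edges act on disjoint sets of copy-tensor legs, and non-decohered edges are ordinary bonds in both pictures, performing this rewrite at every decohered edge transforms the LPS composite TN into the DBM composite TN, establishing equality of the two distributions.

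The main obstacle I expect is the copy-tensor bookkeeping at each decohered edge: I must confirm that the three ingredients — the splitting $\Delta_3$ on the $v$ side, the $\Delta_1$ closing the purification leg under marginalization, and the ordinary bond on the $u$ side — compose to the single fourth-order copy tensor $\decoherence$, rather than to some other contraction. This is cleanest to handle by invoking the closure property that any connected network of copy tensors equals a single copy tensor with the same number of external legs (Figure~\ref{fig:TN_diagrams}g), collapsing these pieces into the required $\Delta_4$ without explicit index chasing. A secondary point is the case $|f^{-1}(v)| > 1$, where several splitting copy tensors attach to one core; here the corresponding purification legs are independent and may be fused into a single edge of product dimension, since marginalizing the fused edge equals marginalizing each leg separately.
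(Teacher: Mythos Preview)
Your proposal is correct and follows essentially the same approach as the paper: define new cores $C^{(v)}$ by absorbing a $\Delta_3$ into $\TNnode$ for each decohered edge in $f^{-1}(v)$, then verify via copy-tensor closure that the marginalized purification edge together with the two splitting copy tensors reproduce $\decoherence = \Delta_4$ on each decohered bond. The paper's own proof is considerably terser---it simply says one ``shifts the vertical edges from decohered edges to the nodes'' and points to Figure~\ref{fig:figure9}b---so your version supplies exactly the copy-tensor bookkeeping the paper leaves implicit.
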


\begin{proof}

Despite the somewhat complicated formulation of Theorem~\ref{thm:dbm_as_lps}, the idea is simple. In contrast to standard BMs, DBMs and LPS both permit direct vertical edges within the composite TN defining the model's probability distribution, and the proof consists of shifting these vertical edges from decohered edges to the nodes themselves. In the case where multiple vertical edges are moved to a single node, all of these can be merged into one single purification edge by taking the tensor product of the associated vector spaces. This gives the purification dimension $d^{(v)}$ appearing in the Theorem statement, with the overall procedure illustrated in Figure~\ref{fig:figure9}b. For nodes which are not assigned any decohered edges, a trivial purification edge of dimension $d^{(v)}=1$ is used. This completes the proof of Theorem~\ref{thm:dbm_as_lps}.

\end{proof}

\section{Proof of Theorem~\ref{thm:cond_ind_dbm}}

\newtheorem*{thm:repeat_three}{Theorem~\ref{thm:cond_ind_dbm}}
\begin{thm:repeat_three}
\Paste{cond_ind_dbm}
\end{thm:repeat_three}

\begin{figure}
\centering
\includegraphics[width=\textwidth]{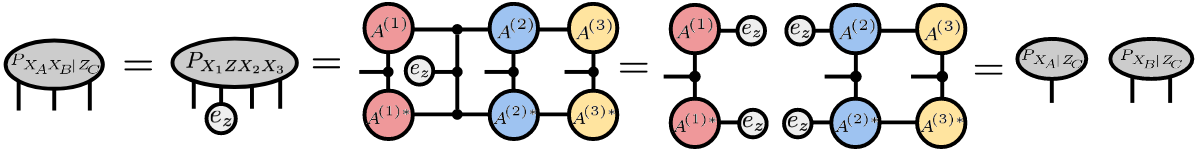}
\caption{Illustration of the conditional independence of decohered Born machines (DBMs), for a DBM over RV $X_1$, $X_2$, $X_3$, and $Z$. $Z_C := Z$ is a latent RV associated with the single decohered edge of the DBM, which is a cut set for the underlying graph. Conditioning on the value of $z$ of $Z$ splits the composite TN into two independent pieces, with the result being a probability distribution where $X_A := X_1$ and $X_B := (X_2, X_3)$ are independent RVs.}
\label{fig:figure10}
\end{figure}

\begin{proof}

From the definition of a cut set, the removal of $\cutedges$ from the graph for the underlying TN partitions $G$ into two disjoint pieces, and the same property holds true for the composite TN giving the DBM probability distribution. Figure~\ref{fig:figure10} illustrates how conditioning on a decohered edge of a DBM results in the splitting of the associated decoherence operator into a tensor product of two rank-1 matrices, which propagate the value of the conditioning value $z$ to both pairs of incident nodes. Consequently, each composite edge whose value is conditioned on will be removed from the composite TN, and if the set of conditioning edges form a cut set for $G$, this will result in the separation of the post-conditional composite TN into two disconnected pieces. This implies the independence of the composite random variables $X_A$ and $X_B$ in the conditional distribution, which completes our proof.

\end{proof}

\section{Experimental Details}
The code used to generate Figure~\ref{fig:hybrid_vs_hmm} is given in the included notebook, which will install necessary libraries, retrain the models, and regenerate the figure if run in sequence. Instructions are in a README file. The saved parameters are also included. We use NumPy \cite{harris2020} (BSD-compatible licence) and JAX \cite{jax2018github} (Apache 2.0 licence) for scientific computation, and Matplotlib \cite{Hunter:2007} (GPL-compatible licence) for visualization. In this section, we discuss the details needed for independent implementation and verification.

Figure~\ref{fig:hybrid_vs_hmm} answers the model selection problem: which model family yields better performance on held-out data? We implement a Hidden Markov Model (HMM) as both a UGM and as a DBM, and learn parameters to maximize marginal likelihood on the Bars and Stripes dataset~\cite{mackay2003information}.

\subsection{Model Complexity: Ensuring Equal UGM and DBM Parameter Counts}
To make a fair comparison between models, we must ensure that they have identical model complexity, measured here as total parameter count. On the same underlying graph, a DBM will have double the number of parameters as a UGM, because each parameter in a DBM is complex-valued and has both a real and an imaginary component.
To match parameter counts, we implement a mixture of two UGMs with independent parameters $\Phi_1$ and $\Phi_2$, where each $\Phi_i$ consists of a collection of clique potential values. The probability of an observation ${\bf O} = \{O_1, O_2, \dots, O_M\}$, $P_{UGM}(O_1, O_2, \dots, O_M)$, is given by the convex combination
\begin{align}
P_{UGM}({\bf O}) = \lambda_{UGM} P({\bf O}; \Phi_1) + (1 - \lambda_{UGM}) P({\bf O}; \Phi_2),
\end{align}
where $P({\bf O}; \Phi)$ denotes the probability assigned by a UGM with clique potentials functions contained in $\Phi$. Note that the UGM is now represented as a mixture distribution where the mixture weight $\lambda_{UGM}$ is found as a point estimate. To ensure the models are representing the same family of distributions, we also make the DBM a mixture. We do so by splitting the weights of a DBM into their magnitude and phase components, with the former specified by clique potentials in the parameter set $\Phi$ and the latter by phase functions in a disjoint parameter set $\Theta$. The magnitude components are used to compute an HMM as in the UGM case, yielding a component $P({\bf O}; \Phi_3)$. The complex phase components $\Theta_3$ are then used to assign complex phases to the magnitudes in $\Phi_3$, leading to a Born machine probability distribution denoted by $P({\bf O}; \Phi_3, \Theta_3).$ The probability of an observation ${\bf O}$ is given by
\begin{align}
\log P_{DBM}({\bf O}) = \log \left( \lambda_{DBM} P({\bf O}; \Phi_3) + (1 - \lambda_{DBM}) P({\bf O};  \Phi_3, \Theta_3) \right),
\end{align}
where we again make a point estimate of the mixture weight $\lambda_{DBM}$.
By sharing $\Phi_3$ between the two components of the DBM, we match parameter counts between the two models, with both models containing $2 |\Phi| + 1$ parameters in total.

\subsection{Dataset, Training, and Hyperparameters}

For the experiments, we generated a Bars and Stripes dataset of $8 \times 8$ images, and varied the HMM hidden dimension $N$ between $2,4,8,16,32$. Code for generating the dataset is included in the UGM and DBM training notebook, as well as a standalone notebook. To represent the images as 1D sequences, we use a horizontal raster scan from upper left to bottom right. Figure~\ref{fig:hybrid_vs_hmm} reports results for $N=\{4,8,16\}$ with 2 and 32 having given similar results to 4 and 16. The observation sequence length was fixed to $T=16$.

Training for each $N$ was replicated 15 times and trained on a different 70\% train, 30\% test split. The sequence of splits was fixed among different $N$, controlled by initializing the random seed to 0 for each experiment.

Parameters for each model are: hidden-to-hidden transition probabilities $T$, observation probabilities $H$, and distribution over the initial hidden state $v$, and convex mixture $\lambda$. For the UGM, we have two independent sets of $T,H,v$. For the DBM, we have an additional set of complex phases $T_c, H_c, v_c$. Both probabilities and phases are given an unconstrained parameterization in log space. The mixture component was parameterized as the inverse of a sigmoid, e.g., an unconstrained real number. Probabilities and mixture weights were initialized as independent standard normal random variables. The complex phase components were independently initialized as $\omega_j = i 2 \pi q_j$, where $i = \sqrt{-1}$ and $q_j \sim \operatorname{Unif}[0,1]$.

We conducted full batch gradient descent on the training split for 30 epochs for each $N$ and each replication, using an adaptive gradient optimizer (\cite{kingma2014adam}). Experiments were run on a MacBookPro15,1 model with a single 2.6 GHz 6-Core Intel i7 and 16GB of memory. Each curve in Figure~\ref{fig:hybrid_vs_hmm} is the mean over 15 replications, and the shaded areas are 1 standard deviation.

\section{Gauge Freedom in Probabilistic Tensor Networks}

\begin{figure}
\centering
\includegraphics[width=\textwidth]{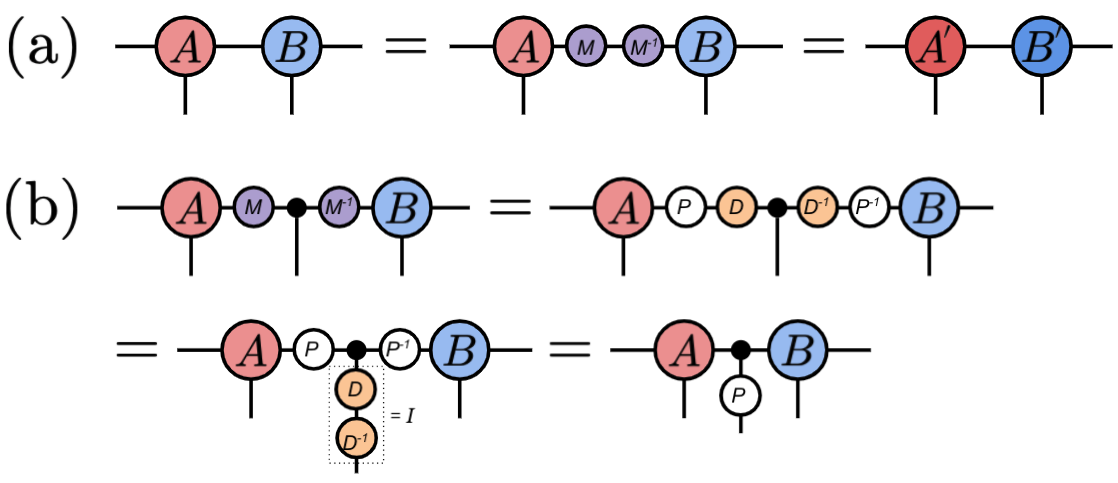}
\caption{(a) Action of a gauge transformation on a hidden edge of a TN. The insertion of an invertible matrix $M$ and its inverse leads to the adjacent tensor cores $A$ and $B$ being transformed into new tensor cores $A'$ and $B'$, which nonetheless describe the same overall tensor when all hidden edges are contracted together. The use of copy tensors in TNs generally restricts this gauge freedom. (b) The restriction of a TN to have cores with entirely non-negative entries forces any gauge transformation to factorize as $M = PD$, for $P$ a permutation and $D$ a diagonal matrix with strictly positive entries. We show how this restricted gauge freedom mostly commutes with any copy tensor inserted into the hidden edge, with copy tensor rewriting rules allowing us to express such gauge transformations as a trivial permutation of the outcomes of the latent RV associated with the hidden edge. This explains why hidden edges of non-negative TNs can be expressed as latent RVs without loss of generality, allowing a faithful representation as a UGM.}
\label{fig:figure11}
\end{figure}

Tensor networks matching the description given in Definition~\ref{def:tn} exhibit a form of symmetry in their parameters commonly referred to as \emph{gauge freedom}. This symmetry is generated by edge-dependent \emph{gauge transformations}, wherein an invertible matrix $M$ and its inverse $M^{-1}$ are inserted in a hidden bond of a TN, and then applied to the two tensor cores on nodes incident to that hidden edge. This results in a change in the parameters of the two incident core tensors, which nonetheless leaves the global tensor parameterized by the TN unchanged. The phenomenon of gauge freedom ultimately arises from the close connection between TNs and multilinear algebra, where gauge transformations on a given hidden edge correspond to changes in basis in the vector space associated with the hidden edge.


The use of copy tensors in a TN leads to a preferred choice of basis, and thereby breaks the full gauge freedom of any edge incident to a copy tensor node. It is therefore surprising that for non-negative TNs, i.e. those with all core tensors taking non-negative values, hidden edges were shown to be expressable as latent RVs without loss of generality, via the insertion of copy tensors in the hidden state space (Section~\ref{sec:ugm_as_nntn}). The generality of this operation means that any non-negative TN can be converted into a UGM by associating hidden edges with latent RVs, where the original distribution over only visible edges is recovered by marginalizing over hidden edges. This fact is a key ingredient in the exact duality between non-negative TNs and UGMs, and differs from quantum-style probabilistic TN models. For example, attempting to observe the latent states associated to hidden edges in a BM will generally lead to a change in the distribution over visible edges, even after marginalizing out these new latent RVs.

We observe here that the generality in associating hidden edges of a non-negative TN to latent RVs is a consequence of the fact that \emph{non-negative TNs already have significantly diminished gauge freedom}. More precisely, in order for a gauge transformation on a hidden edge to maintain the non-negativity of both tensor cores incident to that edge, we must generally have the change of basis matrix $M$, as well as its inverse $M^{-1}$, possess only non-negative entries. This is a strong limitation, and is equivalent to the gauge transformation factorizing as a product $M = PD$, where $P$ is a permutation matrix and $D$ is a diagonal matrix with strictly positive entries~\cite{demarr1972}. We illustrate in Figure~\ref{fig:figure11} how this restricted gauge freedom maintains the overall structure of the copy tensor inserted into a hidden edge, with the result being an irrelevant permutation of the discrete values of the hidden latent variable associated with that edge. 

The situation is quite different for BMs and DBMs, and we remark that the use of decoherence operators in a DBM means that the gauge freedom of such models is different than for BMs. In particular, two BMs whose underlying TNs are related by gauge transformations will necessarily define identical distributions, whereas the corresponding DBMs resulting from decohering some gauge-transformed hidden edges may define different distributions. In this sense, the appropriate notion of gauge freedom for a DBM lies in between that of a BM and a UGM defined on the same graph, in a manner set by the pattern of decohered edges.

The choice of basis in which decoherence is performed can be treated as an additional parameter of the model, and we view the interaction between this basis-dependence of decoherence and basis-fixing procedures related to TN \emph{canonical forms} an interesting subject for future investigation.





\end{document}